\title{C-Disentanglement: Discovering Causally-Independent Generative Factors under \\ an Inductive Bias of Confounder}
\author{%
  Xiaoyu Liu$^{1}$,  Jiaxin Yuan$^{2}$, Bang An$^{1}$ , Yuancheng Xu $^{2}$,  Yifan Yang$^{1}$,  Furong Huang$^{1}$\\
  $^{1}$ Department of Computer Science,  $^{2}$ Department of Mathematics\\
  University of Maryland, College Park\\
  \texttt{\{xliu1231, jyuan98, bangan, ycxu, yang7832, furongh\}@umd.edu} \\
}
\newtheorem{theorem}{Theorem}[section]
\newtheorem{proposition}[theorem]{Proposition}
\theoremstyle{definition}
\newtheorem{definition}[theorem]{Definition}
\newtheorem{rules}[theorem]{Rule}
\theoremstyle{remark}
\newcommand{\myset}[1]{\mathbf{#1}} 
\newcommand{\indep}{\perp \!\!\! \perp}
\newcommand{\eqtext}[1]{\xlongequal{\text{#1}}}%
\newcommand*{\ms}{\myset}
\newcommand{\model}{C-Disentanglement\xspace}
\newcommand{\method}{cdVAE\xspace}
\newcommand{\doc}[1]{do^c (#1)}
\newcommand{\cstar}{\mathbf{C}^{*}}
\definecolor{causal-disentangle}{rgb}{0.1803, 0.4039, 0.5529}
\definecolor{weak-supervision}{rgb}{0.5098,0.1686,0.5128}
\begin{document}

\maketitle

\begin{abstract}

Representation learning assumes that real-world data is generated by a few semantically meaningful generative factors (i.e., sources of variation) and aims to discover them in the latent space. These factors are expected to be causally disentangled, meaning that distinct factors are encoded into separate latent variables, and changes in one factor will not affect the values of the others. Compared to statistical independence, causal disentanglement allows more controllable data generation, improved robustness, and better generalization. However, most existing works assume unconfoundedness (i.e., there are no common causes to the generative factors) in the discovery process, and thus obtain only statistical independence. In this paper, we recognize the importance of modeling confounders in discovering causal generative factors. Unfortunately, such factors are not identifiable without proper inductive bias. We fill the gap by introducing a framework named \textbf{C}onfounded-\textbf{Disentanglement} (C-Disentanglement), the first framework that explicitly introduces the inductive bias of confounder via labels/knowledge from domain expertise. In addition, we accordingly propose an approach to sufficiently identify the causally-disentangled factors under any inductive bias of the confounder. We conduct extensive experiments on both synthetic and real-world datasets. Our method demonstrates competitive results compared to various SOTA baselines in obtaining causally disentangled features and downstream tasks under domain shifts. \href{https://github.com/xliu1231/Causal_Disentangle}{code available here}

\end{abstract}


\section{Introduction}
\label{sec:intro}
\begin{wrapfigure}{r}{0.4\textwidth}
\vspace{-2em}
\includegraphics[width=0.35\columnwidth]{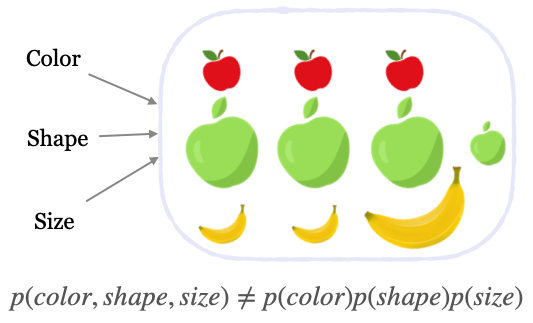}
\caption{Causally independent generative factors (color, shape, size) may appear statistically correlated in observational data.\label{fig:correlation-not-enough}}
\vspace{-1em}
\end{wrapfigure}
Causally disentangled representation learning methods endeavour to identify and manipulate the underlying explanatory causes of variation (i.e., generative factors) within observational data through obtaining \textit{causally disentangled representations} \cite{wang2021desiderata, eastwood2018framework}. 
Such a representation encodes these factors separately in different random variables in the latent space, where changes in one variable do not causally affect the others. 
Pursuing causal independence\footnote{We will use causally disentangled and causally independent interchangeably.} makes it possible to identify the ground truth generative factors that are not statistically independent, such as the color, shape and size in a fruit dataset as shown in \Cref{fig:correlation-not-enough}, which is more realistic and allows more controlled data generation, improved robustness, and better generalization in out-of-distribution problems. 

Despite great success, existing methods suffer from the identifiability issue in discovering semantically meaningful generative factors. The main reason is that most of them equate disentangling in the latent space with enforcing statistical independence \cite{higgins2016beta, kim2018disentangling, chen2016infogan} in the latent space, or requiring no mutual information\cite{eastwood2018framework}. In other words, they explicitly or implicitly assume that the observational dataset is \textit{unconfounded} in the generative process, which is not realistic in practice. A confounder $\ms{C}$ is a common cause of multiple variables. It could create a correlation among causally disentangled generative factors in the observational distribution. In the previous example of a fruit dataset, the type of fruit is a confounder, creating a correlation between size and shape (apples are usually red and round objects), thus size and shape can never be captured by a learning objective that requires statistically independent latent representations. Assuming unconfoundedness in such a dataset leads to discrepancies in characterizing the statistical relationship of the generative factors, and makes them \textit{non-identifiable}.

Considering the limitation of the unconfoundedness assumption, a few of the recent studies take into account confounders~\cite{wang2021desiderata, suter2019robustly, reddy2022causally} in the problem formulation. 
Specifically, \cite{suter2019robustly, reddy2022causally} proposed evaluation metrics in the confounded causal generative process. 
\cite{wang2021desiderata} introduced an unsupervised learning scheme by regulating the learning process with the Independence-Of-Support Score (IOSS) under an unobserved confounder, following the theory that if a latent representation is causally disentangled, then it must have independent support. 
However, it has been shown \textit{almost impossible} to obtain disentangled representations through purely unsupervised learning without proper inductive biases~\cite{horan2021unsupervised, locatello2019challenging}.
While there are a few semi-supervised or weakly supervised methods~\cite{locatello2019challenging, locatello2020weakly} developed seeking statistical independence, none of the above-mentioned methods considers how to provide inductive bias for confounder in learning causally disentangled representations.

In this paper, we recognize the importance of providing inductive bias to confounder so that the ground truth generative factors can be identified and we fill the gap by introducing the inductive bias via knowledge from domain expertise. 
Specifically, we propose a framework called \textbf{C}onfounded-\textbf{Disentanglement} (\textit{\model}). 
\model is, to the best of our knowledge, the first framework that discusses the identifiability issue of generative factors regarding the inductive bias of confounder, and thus opens up the possibility to discover the ground truth causally disentangled generative factors which are correlated in the observational dataset. 
Importantly, \model is a general framework that includes existing methods (in which unconfoundedness is assumed) as special cases, allowing customized inductive bias of confounders according to diverse user needs. 

Under the framework, We develop an algorithm to discover the causally disentangled generative factors in the latent space with inductive bias $\ms{C}$, where $\ms{C}$ is a label set. 
Instead of enforcing global statistical independence among variables on the observational dataset, we partition the dataset into subsets according to realizations of $\ms{C}$, and regulate these variables within each subset. 

Our proposed discovery methodology is general and can be applied to various disentangle mechanisms. But as a prototype for a proof of concept, we implement our method, \method,  under the context of VAE~\cite{kingma2013auto}.
Specifically, we formulate a mixture of Gaussian model in which each Gaussian component, conditioned on a specific value of $\ms{C}$, represents the distribution of a latent variable that estimates the underlying generative factor.
Intuitively, under the previous example of the fruit dataset, if we fix the type of fruit (e.g., the apple) and check the correlation between the color and the shape, we could find that the color change within the apples does not affect the shape distribution, thus can be captured by regulating the statistical independence conditioned on $\ms{C}$ (a fixed fruit type). 

Our methodology is a novel learning paradigm that can discover latent factors through weak supervision labels (i.e., $\ms{C}$).

We experiment with several tasks from image reconstruction and generation, to classification under the out-of-distribution scenario on both real-world and synthetic datasets. We demonstrate that our method outperforms various baselines in discovering generative factors in the latent space.
\textbf{Summary of contributions:} 
\textbf{(1)} We recognize the identifiability issue of discovering generative factors in the latent space. We accordingly introduce a framework, named Confounded Disentanglement (\model). It is the first framework that  discusses how inductive bias of confounder could be explicitly provided via labels/knowledge from domain expertise. 
\textbf{(2)}  We propose an algorithm, \method,  that discovers causally disentangled generative factors in the latent space. The algorithm sheds light on the easy injection of inductive bias into existing methods.
\textbf{(3)} We conduct extensive experiments and ablation studies across various datasets and tasks. Empirical results verify that \method outperforms existing methods in terms of inferring causally disentangled latent representation and also show \method's superiority in downstream tasks under OOD generalization.

\section{Preliminaries}
\label{sec:prelim}
In this section, we introduce basic concepts in causal inference and then show how to evaluate the causal relationship among variables in latent space. 

\noindent \textbf{Causal graph through DAG.} The causal relationship among variables can be reflected by a Directed Acyclic Graph (DAG). Each (potentially high-dimensional) variable is denoted as a node. The directed edges indicate the causal relationships and always point from parents to children. 

\noindent \textbf{Intervention and do-operator.} \textit{Intervention} is one of the fundamental concepts in causal inference. When we intervene on a variable, we set the value of a variable and cut all incoming causal arrows since its value is thereby determined only by the intervention~\cite{pearl2012calculus}. The intervention is mathematically represented by the \textit{do-operator} $do(\cdot)$. Let $Z_1$ and $Z_2$ be two variables, $P(Z_2 | do (Z_1=z_1))$  characterizes an interventional distribution and reflects how a change of $Z_1$ affects the distribution of $Z_2$. 
The do-operation and the interventional distribution should be estimated on the interventional dataset. However, in practice, the true distribution of the data is unavailable but an observational subset. As a result, we estimate the interventional distribution from the observational set following \textit{do-calculus} introduced by~\cite{pearl2009causal}. A detailed introduction can be found in the \Cref{app:do-calculus}.

\noindent \textbf{Confounders bring in spurious correlation.} Although the detailed definitions vary from literature, a confounder usually refers to a common cause (i.e., a common parent in the causal graph) of multiple variables and it brings in a certain level of correlation among these variables. Consequently, to estimate the causal effect of one variable on the other from the observational dataset, we have to eliminate the correlation introduced by the confounders. For example, when analyzing the causal relationship between the number of heat strokes and the rate of ice cream consumption, we may find that there is a correlation. However, the temperature is a confounder in the situation. If we eliminate this spurious correlation by conditioning on the temperature, we may find out that under the same temperature, heat stroke is not correlated to the rate of ice cream consumption.

\noindent \textbf{Evaluation of interventional distribution.} We specifically consider the case where there are variables $Z_1$ and $Z_2$, and we analyze how the existence of parental nodes affects the estimation of $P(Z_2|do(Z_1=z_1))$ on the observational distribution.

\begin{proposition}
    \label{thm:do-evaluation}
    Let $Z_1$ and $Z_2$ be two random variables, $\cstar$ be the ground truth confounder set. If $\ms{C}$ is a superset of or is equivalent to $\cstar$, i.e., $\cstar \subseteq \ms{C}$, 
    with $c$ being a realization of $\ms{C}$, we have
    \begin{equation}
        \label{eq:do-evaluation}
        P(Z_2 | do(Z_1)) = \sum_{c \in \ms{C} } P(Z_2 | Z_1, \ms{C}=c)P(\ms{C}=c)
    \end{equation}
    if no $C \in \ms{C}$ is a descendent of $\ms{Z}$.  
\end{proposition}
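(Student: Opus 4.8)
The statement is an instance of Pearl's \emph{backdoor adjustment} formula, so the plan is to verify that the conditioning set $\ms{C}$ satisfies the backdoor criterion relative to the ordered pair $(Z_1, Z_2)$ and then derive \Cref{eq:do-evaluation} through the three rules of do-calculus reviewed in \Cref{app:do-calculus}. Recall that the backdoor criterion asks for two things: (i) no node of $\ms{C}$ is a descendant of $Z_1$, and (ii) $\ms{C}$ blocks every backdoor path between $Z_1$ and $Z_2$, i.e.\ every path that leaves $Z_1$ through an incoming arrow. Condition (i) is handed to us by the hypothesis that no $C \in \ms{C}$ is a descendant of $\ms{Z}$; note that requiring non-descendance of the whole pair $\ms{Z}=\{Z_1,Z_2\}$ is at least as strong as what the criterion needs. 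For condition (ii), I would argue that every backdoor path must route through a common cause of $Z_1$ and $Z_2$; since by definition $\cstar$ collects all such common causes and $\cstar \subseteq \ms{C}$, conditioning on $\ms{C}$ intercepts each of these paths.

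The derivation then proceeds in three short steps. First I would expand the interventional distribution by the law of total probability over the confounder,
\begin{equation*}
P(Z_2 \mid do(Z_1)) = \sum_{c \in \ms{C}} P(Z_2 \mid do(Z_1), \ms{C}=c)\, P(\ms{C}=c \mid do(Z_1)).
\end{equation*}
Second, I would strip the intervention from the second factor: because no element of $\ms{C}$ is a descendant of $Z_1$, the set $\ms{C}$ is $d$-separated from $Z_1$ in the graph with the incoming edges of $Z_1$ deleted, so Rule~3 of do-calculus gives $P(\ms{C}=c \mid do(Z_1)) = P(\ms{C}=c)$. Third, I would turn the remaining interventional term into an observational one: since conditioning on $\ms{C}$ blocks all backdoor paths, $Z_2$ and $Z_1$ are $d$-separated given $\ms{C}$ in the graph with the outgoing edges of $Z_1$ deleted, so Rule~2 yields $P(Z_2 \mid do(Z_1), \ms{C}=c) = P(Z_2 \mid Z_1, \ms{C}=c)$. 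Substituting both identities back recovers \Cref{eq:do-evaluation}.

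The main obstacle I anticipate lies in the rigorous verification of condition (ii) when $\ms{C}$ is a \emph{strict} superset, i.e.\ $\cstar \subsetneq \ms{C}$. Enlarging the conditioning set is not automatically harmless: conditioning on an extra variable in $\ms{C}\setminus\cstar$ that happens to be a collider, or a descendant of one, on some path between $Z_1$ and $Z_2$ can \emph{open} a path that was previously blocked and thereby break the $d$-separation needed for Rule~2. The work of the non-descendant hypothesis is precisely to control this: it ensures the extra variables sit upstream of $\ms{Z}$ rather than acting as activated colliders that manufacture a spurious dependence between $Z_1$ and $Z_2$ given $\ms{C}$. I would therefore spend the most care on this collider bookkeeping, confirming that the only association between $Z_1$ and $Z_2$ surviving the conditioning on $\ms{C}$ is the directed (causal) one, which is exactly what makes the observational conditional $P(Z_2\mid Z_1,\ms{C}=c)$ a faithful substitute for its interventional counterpart.
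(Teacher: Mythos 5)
Your proposal is correct and follows essentially the same route as the paper's proof: decompose $P(Z_2 \mid do(Z_1))$ by total probability over $\ms{C}$, remove the intervention from $P(\ms{C} \mid do(Z_1))$ via Rule 3 (justified by the non-descendance hypothesis), and convert $P(Z_2 \mid do(Z_1), \ms{C}=c)$ into $P(Z_2 \mid Z_1, \ms{C}=c)$ via Rule 2 (justified by $\ms{C} \supseteq \cstar$ blocking the backdoor paths). If anything, you are more careful than the paper, whose proof invokes Rules 2 and 3 without stating the $d$-separation side conditions; your backdoor-criterion bookkeeping, including the collider caveat when $\cstar \subsetneq \ms{C}$, is precisely the verification the paper leaves implicit.
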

The proof can be found in \Cref{app:proof-prop-3-1}.

Intuitively speaking, \Cref{thm:do-evaluation} states that to accurately estimate the causal relationship between variables, we have to eliminate the spurious correlation by conditioning on confounders. In addition, conditioning on additional variables will not affect the estimation if they are not decedents of $\ms{Z}$.

\noindent \textbf{Causal disentanglement or statistical independence.}  If we had access to a series of ground-truth features that generated a fruit dataset(e.g., color, shape, texture), we can encode each of the features into a variable and concatenate them for a disentangled representation. However, these features in the observational dataset are usually correlated, depending on the type of fruits. If not considered from the causal perspective, traditional methods aiming for statistically independent representations, such as $\beta$-VAE \cite{higgins2016beta}, may not be able to consider correlated features that are disentangled. In fact, neither causal disentangled entails statistical independence nor vice versa because of the existence of confounders. We, therefore, formulate the task of inferring causally disentangled features from the observational under a certain level of confoundedness. 

\newpage

\section{Confounded causal disentanglement via inductive bias}
\label{sec:c-disentanglement}

%
%
%

\subsection{Problem formulation}
\label{sec:problem-formulation}

\begin{wrapfigure}{r}{0.5\textwidth}
\vspace{-2.5em}
\centering
    \includegraphics[width=.5\textwidth]{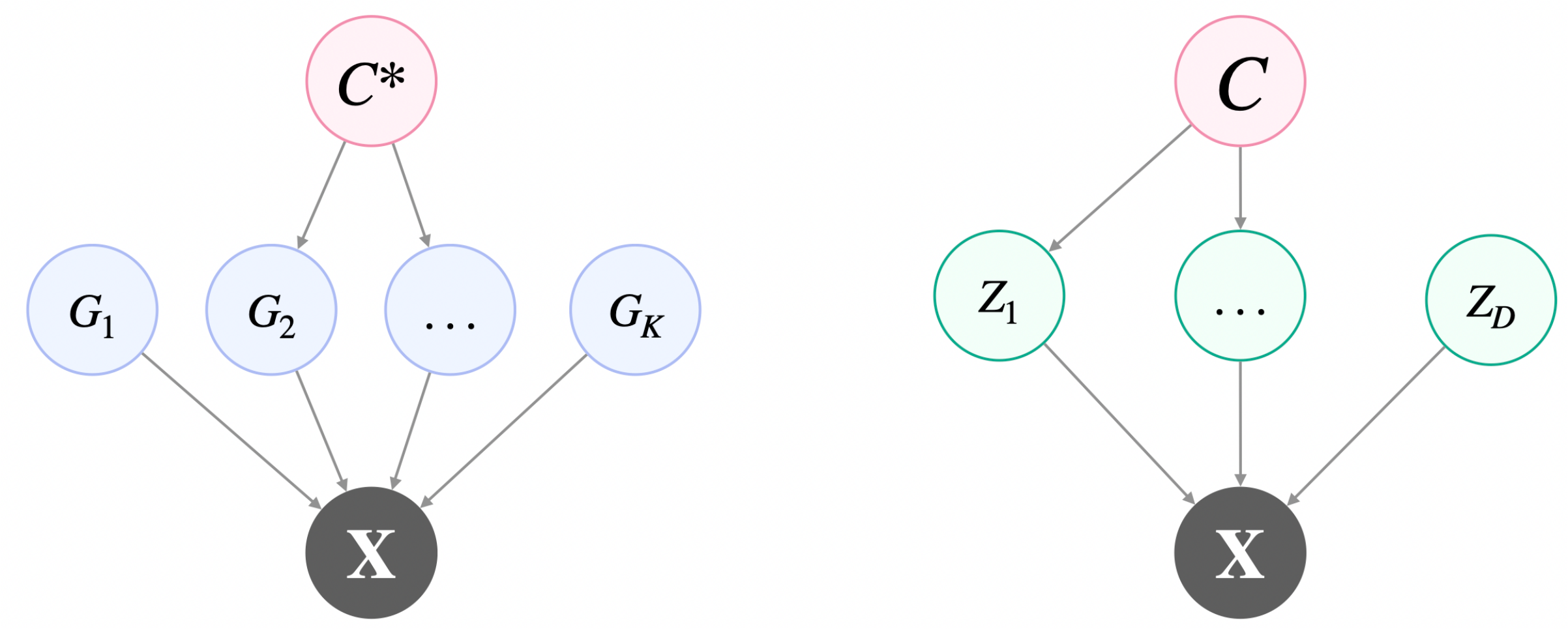} 
\caption{\small The left figure shows the ground truth generative process while the right figure demonstrates the learning task. }
\vspace{-1em}
\label{fig:coufounded-causal-process}
\end{wrapfigure}


We formally frame the task of discovering a set of generative factors from the observational dataset from a causal perspective as shown in \Cref{fig:coufounded-causal-process}.
Let $\ms{X}$ denote the observational data, the confounded causal generative process~\cite{suter2019robustly, reddy2022causally} assumes that $\ms{X}$ are generated from $K$ ground-truth causes of variations $G = [G_1, G_2, ..., G_K]$ (i.e., $G \rightarrow X$) that do not cause each other. These generative factors are generally not available, and they are confounded by some unobserved confounding variables $\cstar$. The task of interest is to discover those generative factors in the latent space (denoted as $\ms{Z} = [Z_1, Z_2, ..., Z_D]$)  that best approximates $\ms{G}$ from $\ms{X}$.

\paragraph{Causal disentanglement among latent generative factors.}
Generative factors, encoded in the latent representational space, are causally disentangled,  meaning that intervening on the value of one factor does not affect the distribution of the others. It is formally defined as: 

\begin{definition}[Causal Disentanglement on Data $\ms{X}$(\cite{pearl2009causal,suter2019robustly,wang2021desiderata})] 
\label{def:causal-disentangled}
A representation is disentangled if, for $i \in \{1,...,D\}$,
\begin{equation}
    P(Z_i |\text{do}(Z_{-i} = z_{-i}), \ms{X}) = P(Z_i | \ms{X}), \quad \forall z_i.
\end{equation}
\end{definition}
where $-i = \{1,2,...,D\} / i$ indicates the set of all indices except for $i$. 

\paragraph{Challenge of unobserved $\cstar$.} The generative factors $\ms{G}$ are not identifiable without proper inductive bias of $\cstar$~\cite{locatello2019challenging}. 
Previous works on discovering the generative factors either obtain disentanglement by enforcing statistical independence on latent variables~\cite{higgins2016beta, liu2015faceattributes, chen2016infogan}, or require that latent variables do not capture information of each other~\cite{eastwood2018framework}. 
Such a setting is equivalent to assuming unconfoundedness of the generative factors. It ignores the possibility that correlated latent variables can also be causally disentangled in the observational distribution, and hence is an assumption too restrictive. 
Fortunately, even though the ground truth generative factors are unobserved, domain expertise may inform a ``reasonable'' or ``likely'' inductive bias of the confounder from an accessible label set, denoted as $\ms{C}$. This $\ms{C}$ is used to account for all correlation among $\ms{Z}$. 


Note here that we do not assume the accessible label set $\ms{C}$ equals to $\cstar$ (but hope that it is close to $\cstar$). The relationship between $\cstar$ and label set $\ms{C}$ must fall into one of the following scenarios, despite the immeasurability of their exact relationships.
\begin{enumerate}[leftmargin=*, label=\textbf{Case \arabic*}]
\label{ls:cases}
    \item \label{case1} The label set contains no information about the confounders, i.e., $\ms{C} = \emptyset$
    \item \label{case2} The label set contains partial information about the confounders, i.e.,$\ms{C} \subset \cstar$,
    \item \label{case3} The label set contains all information about the confounders, i.e.,$\ms{C} = \cstar$
\end{enumerate}

One may argue that $\ms{C}$ may contain information irrelevant to the ground truth confounders. We show in \Cref{app:proofs} that in the confounded generative process described in this paper, irrelevant information in $\ms{C}$ does not affect the evaluation of the interventional distribution, and therefore can be ignored. We only take into consideration how much information in $\cstar$ is captured here without loss of generality.

According to \Cref{thm:do-evaluation}, in case 3, we can estimate  $P(Z_i | do (Z_{-i}=z_{-i}, \ms{X})$ on the observational set with inductive bias from $\ms{C}$. However, in the rest of the cases, the equation does not hold. Therefore, we introduce an operator $\doc{\cdot}$ to estimate the interventional distribution on the observational set under inductive $\ms{C}$. The framework that applies $\doc{\cdot}$ is named  \model, shorten for \emph{Confounded  Disentanglement}.


\subsection{\model as the learning objective}
\label{subsec:cd}

We formally introduce \model and $do^c$ as follows: 

\begin{definition}[\model and $do^c$]
\label{def:doc-model}
Let $\ms{X}$ be the observational data, $\ms{Z} = [Z_1, Z_2, ..., Z_D]$ be a concatenation of $D$ random variables, $\ms{C}$ be a label set selected from domain expertise to provide inductive bias for confounders of the observational data, we define $do^c$ operation as:
\begin{equation}
    \label{eq:doc}
    P(Z_i | do^c (Z_{-i} = z_{-i}), \ms{X}) = \sum_{c \in \ms{C}} P(Z_i | \ms{X}, Z_{-i} = z_{-i}, \ms{C}=c)P(\ms{C}=c)  
\end{equation}    
$\forall i \in {1,2,3, ..., D}$, where $c$ is realizations of  $\ms{C}$.  $\ms{Z}$ obtains \model on $\ms{X}$ given $\ms{C}$ if 
\begin{equation}
    \label{eq:def-model}
    P(Z_i | do^c (Z_{-i} = z_{-i}), \ms{X}) = P(Z_i | \ms{X}).
\end{equation}
\end{definition}




\paragraph{$\ms{C}$ as an approximation of $\cstar$.} Whether the ground truth generative factors can be discovered depends on whether their statistical relationship on the observational distribution is correctly characterized, and it requires knowledge of $\cstar$. However, due to the unobservable nature of the confounders, we use $\ms{C}$ as an approximation of $\cstar$, as in \Cref{eq:def-model}. Then the identifiability of the ground truth generative factors depends on the relationship between $\ms{C}$ and $\cstar$. We thus examine three possibilities as listed in \cref{ls:cases} together with a case study as in \Cref{fig:illustration}.


\begin{figure}[!htbp]
\centering
    \hfill
    \begin{subfigure}[t]{.245\textwidth}
    \includegraphics[width=\textwidth]{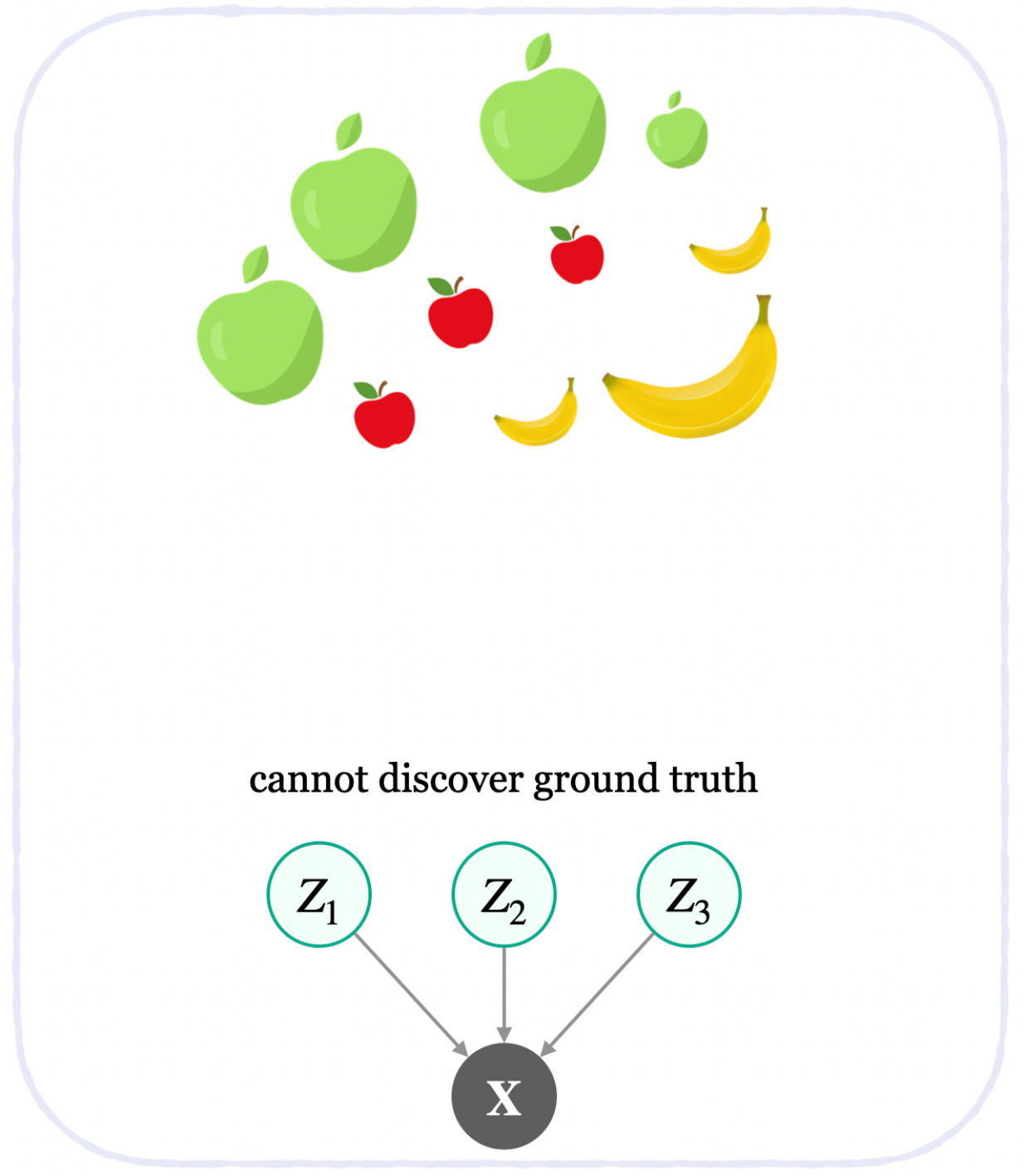}
    \caption{$\ms{C} = \emptyset$}
    \label{subfig:emptyC}
    \end{subfigure}
    \hfill
    \begin{subfigure}[t]{0.245\textwidth}
    \includegraphics[width=\textwidth]{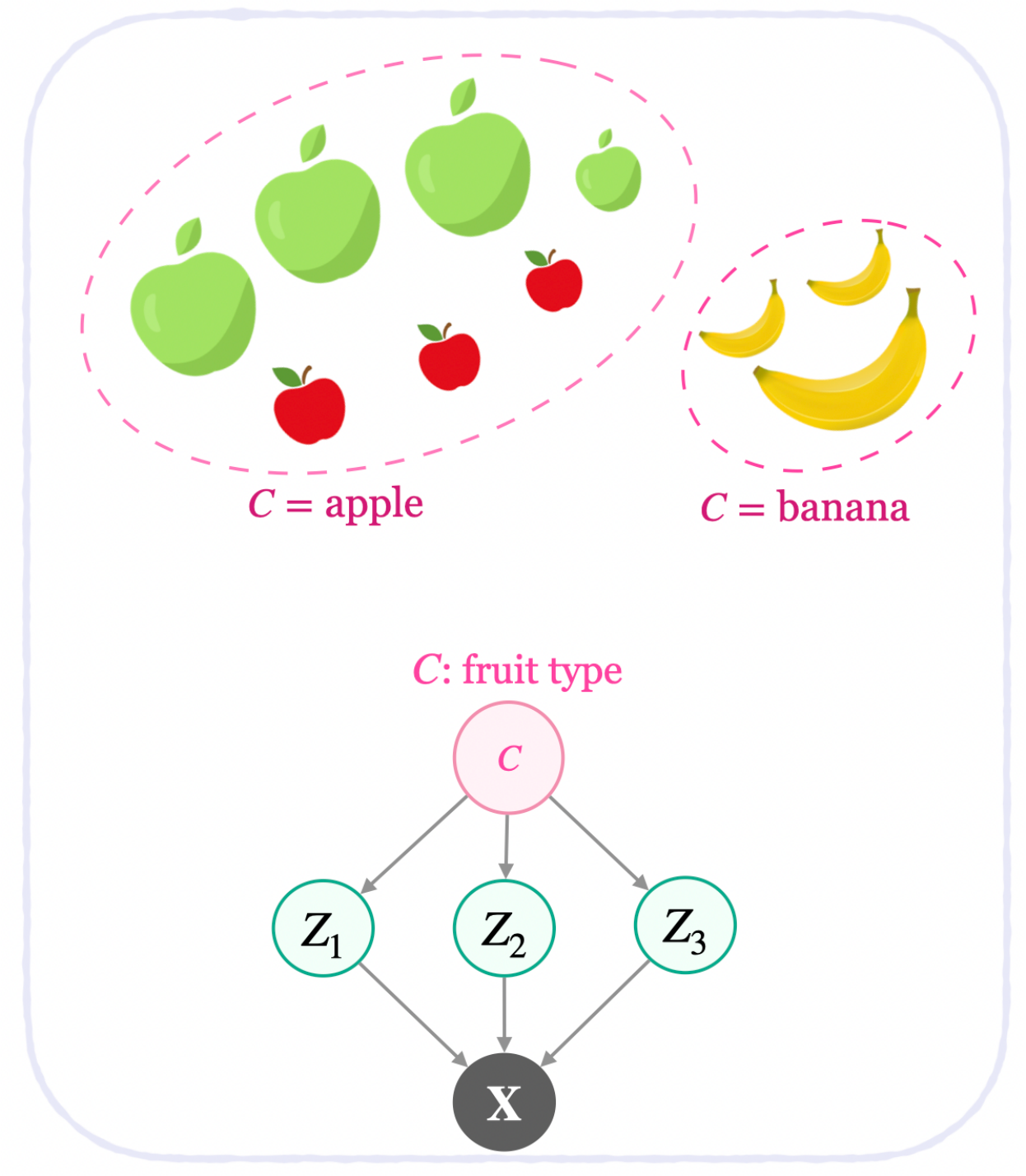}    
     \caption{$\ms{C} \subset \cstar$}
    \label{subfig:partialC}
    \end{subfigure}
    \hfill
    \begin{subfigure}[t]{0.245\textwidth}
    \includegraphics[width=\textwidth]{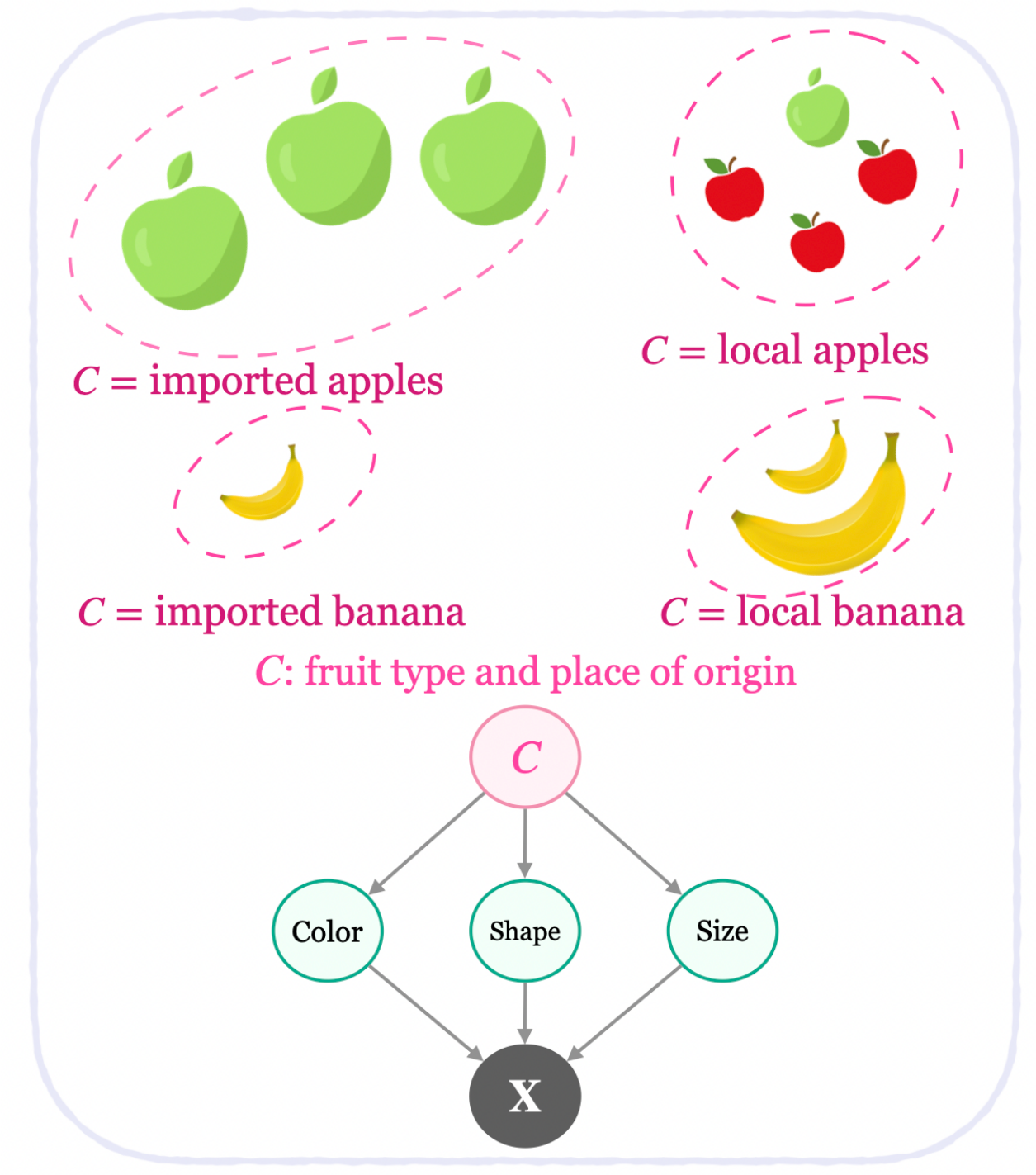}
    \caption{$\ms{C} = \cstar$}
    \label{subfig:equal}
    \end{subfigure}
    \hfill
    \begin{subfigure}[t]{0.245\textwidth}
    \includegraphics[width=\textwidth]{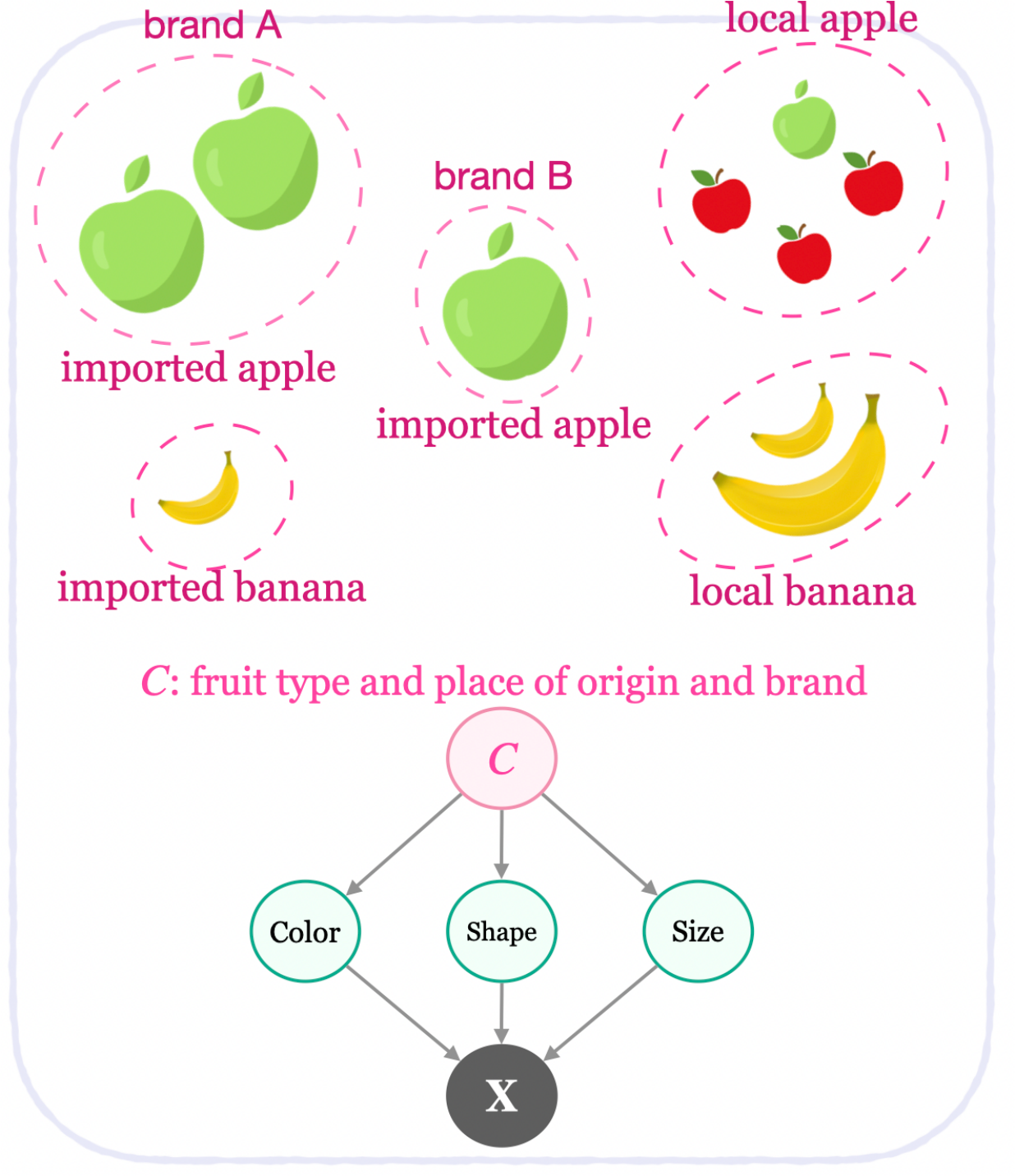}
    \caption{$\ms{C} \supset \cstar$}
    \label{subfig:super}
    \end{subfigure}
    \hfill
\caption{
\textbf{Case study: a fruit dataset.} Suppose a fruit dataset is generated by \textit{color, shape and size.} These factors are correlated depending on the type of fruit and the place of origin. One way to estimate the causal effect within latent variables, proposed as \model, is to condition on realizations of the confounders $\ms{C}$, which partitions the data into subsets and inspects within each set. Figure (a)-(d) demonstrate partitions under different $\ms{C}$. 
In \cref{subfig:emptyC}, when $\ms{C} = \emptyset$, color, shape and size cannot be identified, as the requirement of statistical independence contradicts the fact that they are correlated in the observational set.
In \cref{subfig:partialC}, only partial information is given. It still helps as it can identify that size and shape are causally independent although entangled by color.
In \cref{subfig:equal} and \cref{subfig:super}, the ground truth generative factors can be discovered.}
\label{fig:illustration}
\end{figure}


In \ref{case1}, $\ms{C}$ is an empty set as shown in ~\cref{subfig:emptyC}. It is equivalent to assuming that the generative process is unconfounded and \Cref{eq:doc} degrades to statistical independence on the observational set: 
\begin{equation}
\label{eq:no-label}
    P(Z_i | \doc{Z_{-i} = z_{-i}}, \ms{X}) = P(Z_i | Z_{-i} = z_{-i}, \ms{X}) = P(Z_i | \ms{X}).
\end{equation} 
This contradicts the fact that these factors are actually correlated and such an assumption makes the ground truth generative factors unidentifiable when there exists a underlying confounder.

In \ref{case3}, as shown in \cref{subfig:equal} and \cref{subfig:super}, $\ms{C} \supseteq \cstar$, $do$ is equivalent to $do^c$. Thus the ground truth generative factors can be identified. 

In \ref{case2},  partial information about the confounders is given as shown in \cref{subfig:partialC}. Heuristically speaking, this relaxes the constraint of statistical independence on the observational set and therefore enlarges the solution set, providing an increased possibility for the ground truth factors to be recovered. 
We empirically show in \Cref{sec:experiments} that even partial information of confounders improves accuracy, outperforming existing methods in various tasks, even under distribution shifts.

\section{\method: identify causally disentangled factors in the latent space.}
\label{sec:implementation}
In this section, we provide an algorithm, \method, for confounded disentangled VAE, to identify the latent causally disentangled generative factors under confounder $\ms{C}$ in the context of VAE.

\subsection{Learning objective}

Given $\ms{X}$ as the dataset, we hope to find a deterministic function $f$, parameterized by $\theta$, where $f: \mathcal{Z} \rightarrow \mathcal{X}$ such that (1) $P(X) = \int f(\ms{Z};\theta) P(\ms{Z})dz$ is maximized and (2) each $Z_i$ encodes causally disentangled generative factors, with $\ms{Z} = [Z_1, Z_2,..., Z_D]$ as a concatenation of random variables in the latent space. 

More concretely, $f(z;\theta)$ is characterized by a probability distribution $P(\ms{Z} | \ms{X}; \theta)$, and $p(\ms{Z})$ is a prior distribution from where $\ms{Z}$ can be easily sampled. 
For each $Z_i \in \ms{Z} $, we require 
\begin{equation}
    \label{eq:doc-obj}
    P(Z_i | \doc{Z_{-i}}, \ms{X}) = P(Z_i | \ms{X}), \quad \forall c \in \ms{C}.
\end{equation}
Applying the variational Bayesian method \cite{kingma2013auto}, 
the learning object is to optimize the evidence lower bound (ELBO) while satisfying the constraints on causal disentanglement: 
\begin{align}
    \label{eq:learning-obj}
     \max_{\theta, \phi} \quad  &\mathbb{E}_{\ms{Z} \sim Q}[logP(\ms{X}|\ms{Z}; \theta)] - D[Q(\ms{Z} | \ms{X}; \phi) || P(\ms{Z})] \\
     s.t. \quad &  P(Z_i | \doc{Z_{-i}}, \ms{X}) - P(Z_i | \ms{X}) = 0 \quad \forall i \in {1,2,...,D}
\end{align} 

For simplicity, we omit all model parameters $\theta, \phi$ in writing.

\subsection{Learning strategy}

\begin{figure}
    \centering
    \vspace{-2em}
    \includegraphics[width=\textwidth]{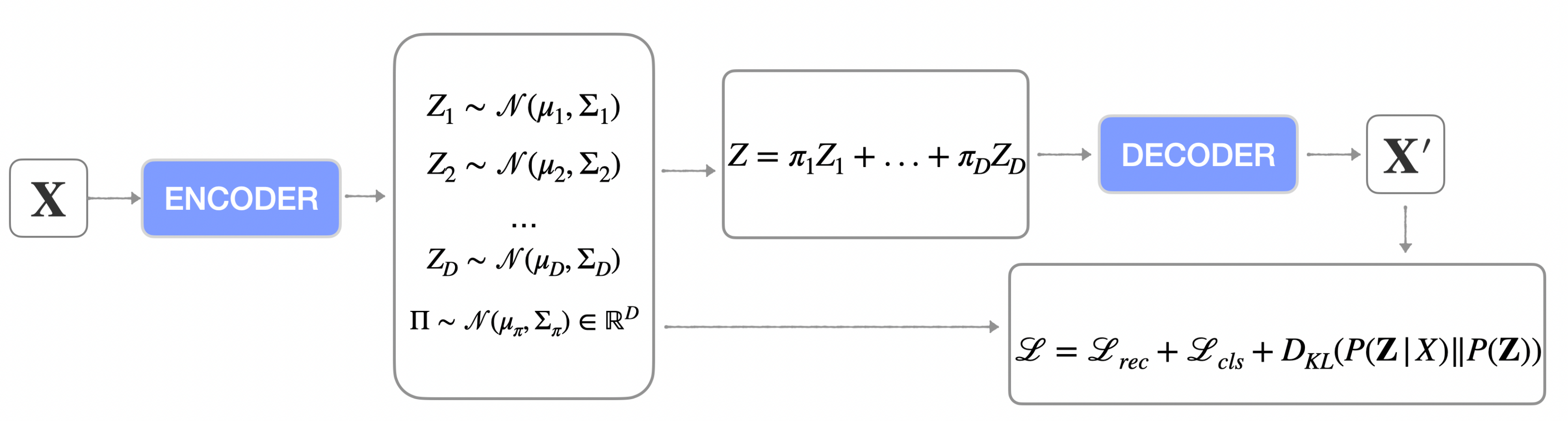}
    \caption{\textbf{Learning paradigm of \method.} The input data $\ms{X}$ is partitioned by realizations of the confounder $\ms{C}$. We infer from each partition a Gaussian distribution and form a mixture of Gaussian model to characterize the distribution of $\ms{Z}$ on the observational distribution. We assume soft assignment of samples and further infer $\pi^c(\ms{X})$ that resembles $\ms{C}$.}
    \label{fig:algorithm}
    \vspace{-1em}
\end{figure}

We start from the estimation of the causal disentanglement constraint as shown in \Cref{eq:doc-obj}. Because the probability distribution is hard to directly calculate, inspired by \cite{suter2019robustly}, we resort to its first-order moment as an approximation. Specifically, we estimate the $L_1$ distance between the expectation of probability $P(Z_i | \doc{Z_{-i}}$, $\ms{X})$ and $P(Z_i | \ms{X})$ as follows:
\begin{equation}
    \label{eq:doc-expecatation}
    l_c = \sum_{i=1}^{D} d[ \mathbb{E}(Z_i | \doc{Z_{-i}}, \ms{X}), \quad \mathbb{E}(Z_i | \ms{X})].
\end{equation}
We show in the following theorem that \cref{eq:doc-expecatation} is satisfied the learned latent variable  $\ms{Z}$ subjects to a Gaussian distribution with a diagonal covariance matrix. Proof can be found in \cref{app:proof-thm-4-1}.
\begin{theorem}
\label{thm:mixture-gaussian} 
Suppose that the latent variable $\ms{Z}$ on dataset $\ms{X}$ given $\ms{C} = c$ subjects to the Gaussian Distribution $\mathcal{N}(\mu^c(\ms{X}), \Sigma^c(\ms{X}))$. Specifically,
\begin{equation*}
    P(\ms{Z}|\ms{C}=c, \ms{X}) = \displaystyle (2\pi )^{-D/2}\det({ {\Sigma^c }})^{-1/2}\,\exp \left(-{\frac {1}{2}}(\mathbf {Z} -{ {\mu^c }})^{\!{\mathsf {T}}}{{(\Sigma^c) }}^{-1}(\mathbf {Z} -{{\mu^c }})\right),
\end{equation*}
where $\ms{Z} \in \mathbb{R}^D$.
If $\Sigma^c(\ms{X})$ is diagonal for all $c$, we have
 \begin{equation}
     l_c=\sum_{i=1}^{D}d(\mathbb{E}(Z_i | \doc{Z_{-i}}, \ms{X}) , \quad  \mathbb{E}(Z_i | \ms{X})) = 0.
 \end{equation}
\end{theorem}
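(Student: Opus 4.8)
The plan is to prove the stronger termwise statement that each $d(\mathbb{E}(Z_i \mid \doc{Z_{-i}}, \ms{X}),\, \mathbb{E}(Z_i \mid \ms{X})) = 0$, whence $l_c = 0$ since it is a sum of nonnegative distances; the engine of the argument is the elementary fact that a Gaussian vector with diagonal covariance has mutually independent coordinates. First I would expand the intervened expectation using \Cref{eq:doc} and linearity of expectation,
\[
\mathbb{E}(Z_i \mid \doc{Z_{-i} = z_{-i}}, \ms{X}) = \sum_{c \in \ms{C}} \mathbb{E}(Z_i \mid \ms{X}, Z_{-i} = z_{-i}, \ms{C} = c)\, P(\ms{C} = c),
\]
which reduces everything to evaluating the inner conditional means.

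The key step is that inner mean. Conditional on $\ms{C} = c$ and $\ms{X}$, the vector $\ms{Z}$ is Gaussian with diagonal covariance $\Sigma^c(\ms{X})$, and for a Gaussian a diagonal covariance is equivalent to mutual independence of the coordinates, so $Z_i \indep Z_{-i} \mid \ms{X}, \ms{C} = c$. Conditioning further on $Z_{-i} = z_{-i}$ then changes nothing, giving $\mathbb{E}(Z_i \mid \ms{X}, Z_{-i} = z_{-i}, \ms{C} = c) = \mu_i^c(\ms{X})$, independent of $z_{-i}$. Substituting back yields $\mathbb{E}(Z_i \mid \doc{Z_{-i} = z_{-i}}, \ms{X}) = \sum_{c} \mu_i^c(\ms{X})\, P(\ms{C} = c)$.

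It remains to match this against $\mathbb{E}(Z_i \mid \ms{X})$. Representing $P(\ms{Z} \mid \ms{X})$ as the $\ms{C}$-weighted mixture of the components $\mathcal{N}(\mu^c(\ms{X}), \Sigma^c(\ms{X}))$ and applying the law of total expectation gives $\mathbb{E}(Z_i \mid \ms{X}) = \sum_{c} \mu_i^c(\ms{X})\, w_c$, and the two expressions coincide once the mixing weights $w_c$ agree with the weights $P(\ms{C} = c)$ carried by $\doc{\cdot}$. I expect this weight bookkeeping to be the main obstacle: the operator marginalizes with the marginal $P(\ms{C}=c)$, whereas a naive total-expectation identity would instead use $P(\ms{C}=c \mid \ms{X})$. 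The proof must therefore commit to the interpretation in which $P(\ms{Z}\mid\ms{X})$ is exactly the mixture with the weights appearing in \Cref{eq:doc} (equivalently, the soft assignments $\pi^c(\ms{X})$ of \Cref{fig:algorithm}), so that the two weightings cancel term by term; the Gaussian conditional-independence step itself is routine, and this weight consistency is where the real care lies.
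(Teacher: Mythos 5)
Your proposal is correct in substance and shares the paper's engine---the fact that a Gaussian with diagonal covariance has within-component conditional mean equal to its component mean---but the bookkeeping is genuinely different, and the difference is instructive. The paper's own proof never expands the $do^c$ mixture at all: after its first line it conditions \emph{both} expectations on $\ms{C}=c$, i.e.\ it compares $\mathbb{E}[Z^c_i \mid Z^c_{-i}]$ with $\mathbb{E}[Z^c_i]$ for $\ms{Z}^c = [\ms{Z} \mid \ms{X}, \ms{C}=c]$ inside a single Gaussian component (justifying this step by a rather loose appeal to \Cref{thm:do-evaluation}), and then finishes with the explicit conditional-mean formula $\mu^c_i + \Sigma^c_{i,-i}(\Sigma^c_{-i,-i})^{-1}(Z^c_{-i}-\mu^c_{-i})$, whose cross-covariance block $\Sigma^c_{i,-i}$ vanishes when $\Sigma^c$ is diagonal---equivalent to your independence argument, just derived from the regression formula instead. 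Under that per-component reading of $l_c$, the marginal-versus-posterior weight question you worry about never arises. You instead take the displayed definitions literally---\Cref{eq:doc} with weights $P(\ms{C}=c)$ and \Cref{eq:gmm-z} with weights $\pi_c = P(\ms{C}=c\mid\ms{X})$---and correctly discover that the two expectations are then \emph{different} convex combinations of the same component means, $\sum_c \mu^c_i(\ms{X})\,P(\ms{C}=c)$ versus $\sum_c \mu^c_i(\ms{X})\,P(\ms{C}=c\mid\ms{X})$, so the theorem read this way needs exactly the weight consistency you impose (or constancy of $\mu^c_i$ across $c$). This is a real imprecision in the paper that its own proof silently sidesteps via the per-component conditioning; your route is faithful to the stated formulas and makes the needed hypothesis explicit, while the paper's route proves the cleaner per-component statement that its surrounding text (``for each $\ms{C}=c$, enforcing \dots'') actually uses. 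Both resolutions are legitimate, and your core argument is sound; just note that your proof and the paper's coincide only once that interpretive choice about the mixture weights is fixed.
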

From \Cref{thm:mixture-gaussian}, we see that for each $\ms{C} = c$, enforcing the latent variable $\ms{Z}$ to be statistically independent minimizes $l_c$. Taking the whole $\ms{C}$ set into consideration, $P(\ms{Z} | \ms{X})$ subjects to a mixture of Gaussian distribution where each centroid is inferred from observational data under a specific realization of the confounder $\ms{C}$:
\begin{equation}
    \label{eq:gmm-z}
     P(\ms{Z} | \ms{X}) = \sum_{c \in C} \pi_c  \mathcal{N}(\mu^c(\ms{X}), \Sigma^c(\ms{X})).
\end{equation}
The mixing coefficient $\pi_c = P(\ms{C} = c|\ms{X})$ reads the probability of occurrence of $\ms{C} = c$. Nevertheless, such a hard assignment of coefficient varies with observable dataset and cannot accommodate scenarios in which the label set $\ms{C}$ does not exist. In this paper, we parameterize $\pi_c$ as a Gaussian distribution for a soft assignment of samples: $\pi_c \sim \mathcal{N}(\mu^{c}(\ms{X}), \Sigma^{c}(\ms{X}))$. Parameters $\mu^{c}(\ms{X})$ and $\Sigma^{c}(\ms{X})$ are learned to minimize the discrepancy with $P(\ms{C} = c|\ms{X})$.

In VAEs, the prior of the latent space $P(\ms{Z})$ is assumed to follow a Gaussian distribution with mean zero and identity variance: $\ms{Z} \sim \mathcal{N}(\ms{0}, I)$. To avoid enforcing statistical independence of the overall latent spaces we learn, we only assume that the prior of $\ms{Z}$ to be a Gaussian distribution with variance one for each subset of $\ms{C}$. 

Concretely, suppose that the latent variable $\ms{Z}$ for $\ms{X}$ under $\ms{C} = c$ follows a Gaussian distribution $\mathcal{N}(\mu^c(\ms{X}), \Sigma^c(\ms{X}))$, then the KL divergence in (\ref{eq:learning-obj}) regulates the distribution to $\mathcal{N}(\mu^c({\ms{X}}), I)$.

By the Lagrangian multiplier method, the new loss function is 
\begin{equation}
    \label{eq:cdVAE}
    \mathcal{L} = \underbrace{-\mathbb{E}[\log P(\ms{X}|\ms{Z})]}_{\mathcal{L}_{rec}}   + \underbrace{\mathbb{E}[\log P(\ms{C}|\pi_c(\ms{X}))]}_{\mathcal{L}_{cls}} + D_{KL}[P(\ms{Z}| \ms{X}, \ms{C}) || P(\ms{Z}|\ms{C})].
\end{equation}

The algorithm pseudocode can be found in \Cref{app:experiments}.

\section{Experiments}
\label{sec:experiments}

In this section, we experimentally compare \method with various baselines on synthetic and real-world datasets, and study properties of \method through the ablation studies. We demonstrate that \method allows for causally disentangled but statistically correlated features being discovered in the latent space, which better approximates the ground truth generative factors and outperforms SOTA baseline under distribution shifts.
 
Concretely, we aim to answer the following questions regarding the proposed model:
\begin{list}{$\rhd$}{\topsep=0.ex \leftmargin=0.2in \rightmargin=0.in \itemsep =0.0in}
    \item \textbf{Q1}: How does it perform compared to the existing methods in the latent space? 
    \item \textbf{Q2}: How does it perform in downstream tasks such as classification under distribution shifts? 
\end{list}

\subsection{Basic setup}
\noindent\textbf{Datasets.} we evaluate \method on three datasets: synthetic datasets 3dshape \cite{3dshapes18} and Candle \cite{reddy2022causally}, and real-world dataset CelebA~\cite{liu2015faceattributes}. 
3dshape is a dataset of 3D shapes generated from 6 ground-truth independent latent factors. These factors are floor color, wall color, object color, scale, shape, and orientation. There are 480,000 images from all possible combinations of these factors, and these combinations are present exactly once.
Candle is a dataset generated using Blender, a free and open-source 3D CG suite that allows for manipulating the background and adding foreground elements that inherit the natural light of the background. It has floor hue, wall hue, object hue, scale, shape, and orientation as latent factors. 

We use 3dshape and Candle as synthetic datasets. 
To mimic the real-world scenario where perfectly disentangled ground-truth generative factors do not exist or are hard to identify, we manually create a certain level of correlation in these datasets by first making rules among certain attributes and then sampling accordingly.

\noindent\textbf{Baselines and tasks}.
We compare the performance of \method in the task of image generation and classification under distribution shift with the following architectures: 
(1) VAE-based methods (vanilla VAE~\cite{kingma2013auto}, $\beta$-VAE~\cite{higgins2016beta}, FactorVAE~\cite{kim2018disentangling}) that equate disentanglement as statistical independence,
(2) Existing causal regulation methods, CAUSAL-REP~\cite{wang2021desiderata},
(3) cVAE~\cite{kingma2013auto} as it also applies the label information
(4) GMVAE~\cite{dilokthanakul2016deep} as it also adopts a mixture of Gaussian model in a variational autoencoder framework.

Note that in this paper, we do not compare \method with CausalVAE~\cite{yang2021causalvae}, despite the latter also obtaining ``disentanglement'' in the latent space. The main reason is that the problem-setting and the learning objectives are fundamentally different. CausaslVAE aims to disentangle known ground truth generative factors in the latent space, it aims to bind these factors one by one to latent variables and the causal dependency is not considered. The task of interest in this paper is to discover causally disentangled factors in the latent space and the ground truth is unavailable, thus these two methods are not comparable.

\noindent\textbf{Evaluation metrics}.
The experimental results are evaluated on (1) end-to-end evaluation metrics: the accuracy of classification under distribution shifts and reconstruction loss in image generation task. (2) how well the learned latent generative factors recover the ground truth one: Maximal Information Coefficient (MIC)
and Total Information Coefficient (TIC)~\cite{kinney2014equitability}. (3) Existing disentanglement scores from a causal perspective: IRS~\cite{suter2019robustly}, UC/CG~\cite{reddy2022causally}, IOSS~\cite{wang2021desiderata}, and statistical perspective: D-Score~\cite{eastwood2018framework}. The higher these evaluation metrics, the better the model except for IOSS (the lower, the better). A detailed introduction of these metrics can be found in \Cref{app:experiments}.

To be more concrete, 
IRS evaluates the general interventional robustness of learned representations while UC and CG are extended from IRS. 
UC (Unconfoundedness) evaluates how well distinct generative factors are captured by distinct sets of latent dimensions with no overlap. 
CG (Counterfactual Generativeness) measures how robust a certain latent variable is against the others. 
IOSS measures whether learned random variables have independent supports as a necessary condition inferred from the definition of causal disentanglement. 
$D$ score measures disentanglement in the non-causal definition. 

\vspace{-1em}

\subsection{Experimental results}

\noindent\textbf{(Q1) \method significantly outperforms various baselines in end-to-end measurement.}\\
We compare our \method with baseline models in the image generation task on the CelebA, and Candle datasets. In CelebA dataset, we have $\ms{C}$ set to be whether there are eyeglasses in the image, i.e., $|\ms{C}| = 2$. In the Candle dataset, we have $|\ms{C}| = 4$, with 2 objective hues and two wall hues combined together. More details can be found in \Cref{app:experiments}.

As shown in \Cref{tab:reconstruction-candle-celebA}, \method are evaluated by several groups of metrics. Reconstruction error measures how well the images are reconstructed in an end-to-end fashion while the others are measurements of how disentangled the latent factors are: D score is from the non-causal perspective while UC, UG and IOSS are from the causal perspective.  For the celebA dataset, we do not measure the UC and CG score as it requires ground truth generative factors or access to the true generative process. Our method outperforms all baselines on these metrics except for the D score. Note the D score is only an effective measurement when there are no confounders in the observational dataset, as it requires each latent variable not to contain information about the others. Under the confounded dataset such as CelebA and Candle, the ground truth generative factors are correlated, resulting in a low D score with our model as expected.

\begin{table}[t]
\vspace{-1em}
\caption{\textbf{Compare with baselines in image generation task on celebA and candle.} The reconstruction error indicates the end-to-end performance of the image generation task. D-score measures from a non-causal perspective and requires that the generation process is unconfounded. We expect that a good method that recovers causally disentangled factors should obtain poor (i.e., low) D-scores. IOSS, UC and CG are causal metrics that measure the level of disentanglement of a representation.}
  \label{tab:reconstruction-candle-celebA}
  \begin{center}
  \resizebox{\textwidth}{!}{
  \begin{tabular}{lcccccccc}
    \toprule
    &  \multicolumn{3}{c}{\textbf{CelebA}} &  \multicolumn{4}{c}{\textbf{Candle}}  \\ \cmidrule(lr){2-4} \cmidrule(lr){5-9}              
    \textbf{Methods} & \textbf{Recon} $\downarrow$ & {
    \begin{tabular}{c}
        \textbf{D} $\uparrow$\\ (non-causal)\\
    \end{tabular}
    }   &  {
    \begin{tabular}{c}\textbf{IOSS} $\downarrow$\\ (causal)\\
    \end{tabular}
    } 
    & \textbf{Recon} $\downarrow$ & {
    \begin{tabular}{c}
        \textbf{D} $\uparrow$\\ (non-causal)\\
    \end{tabular}
    } &  {
    \begin{tabular}{c}\textbf{UC} $\uparrow$\\ (causal)\\
    \end{tabular}
    }  & {
    \begin{tabular}{c}\textbf{CG} $\uparrow$\\ (causal)\\
    \end{tabular}
    }  &  {
    \begin{tabular}{c}\textbf{IOSS} $\downarrow$\\ (causal)\\
    \end{tabular}
    }  \\
    \midrule
    \multicolumn{1}{l}{VAE}          & 0.33       &  0.11 &  0.78 &  .024 & 0.14 & 0.10 & 0.18 & 0.69  \\
    \multicolumn{1}{l}{$\beta$-VAE}  & 0.27       &  0.15 &  0.74 &  .017 & \textbf{0.18} & 0.11 & 0.24 & 0.54 \\
    \multicolumn{1}{l}{FactorVAE}    & 0.25       &  \textbf{0.17} &  0.68 &    .014  &  0.15 &  0.13 & 0.26 & 0.51  \\
    \midrule
    \multicolumn{1}{l}{CAUSAL-REP}   & 0.29       &  0.16 &  0.34 &   .012  & \textbf{0.18} & 0.20 & 0.32 & 0.31 \\
    \midrule
    \multicolumn{1}{l}{cVAE}         & 0.32      & 0.14  & 0.64 &  .020 &  0.14 & 0.12 & 0.21 &  0.62 \\
    \multicolumn{1}{l}{GMVAE}        & 0.30      &  0.13 & 0.71 &    .018 &  0.12  & 0.09 & 0.16 & 0.71 \\
    \midrule
    \multicolumn{1}{l}{\method} & \textbf{0.18} &  0.12 & \textbf{0.21} &      \textbf{.008} & 0.11  & \textbf{0.35} & \textbf{0.54} & \textbf{0.16}\\
  \bottomrule
\end{tabular}
}
\end{center}
\vspace{-3em}
\end{table}

\noindent\textbf{(Q2.1) \method are more robust under distribution shifts.}\\
We conduct the task of shape classification on the 3dshape dataset with distribution shift and use the classification accuracy as a metric for out-of-distribution generalization\cite{zhu2021understanding}. Specifically, in the source distribution, we sample a certain percentage of images in which the object hue is correlated with the object shape (i.e., red objects are cubes). The rest of the images are evenly generated by disentangled factors, while in the target domain, all images are generated by disentangled factors. The proportion of highly correlated data is denoted by \textit{shift severity}. For example, shift severity = 0.4 means that $40\%$ of training images are sampled under preset correlation between object hue and object shape.

We train \method, $\beta$-VAE, and CAUSAL-REP using images from the source set, with decoders being replaced by classifiers. The trained classifier is then tested on the targets set. We report the classification accuracy on the target set and the performance drop in Figure 5. We could observe that \method is more robust than other baselines under distribution shift as it has the lowest performance drop and the highest target set accuracy.

\noindent \textbf{An intuitive explanation of how \model improves the OOD generalization.} Suppose we have a fruit dataset and the goal is to classify fruits. In the training set, a large amount of apples are red and round. If assuming unconfoundedness and requires statistical independence, it is possible that a latent variable encodes whether this object is round-and-red. As a result, in the test set, if there is a green apple, it will fail the test and may not be able to be identified as an apple. However, under the framework of \model, color and shape are disentangled in the latent space, and the shape and color contribute to the classification separately, resulting in a higher probability for the green apple also to be identified as an apple.

\begin{minipage}[c]{0.42\textwidth}
\centering
\captionof{table}{\small Compare how ground truth generative factors are recovered (MIC/TIC) and how disentangled they are in the latent space in classification on 3dshape dataset with shift severity = 0.5. }
\begin{small}
\begin{tabular}{lccc}
    \toprule
    \textbf{Methods} & \textbf{MIC} $\uparrow$ & \textbf{TIC} $\uparrow$ & \textbf{IRS} $\uparrow$ \\
    \midrule
    \multicolumn{1}{l}{VAE}          & 21.9 & 12.1 & 0.82 \\
    \multicolumn{1}{l}{$\beta$-VAE}  & 22.1 & 12.4 & 0.85\\
    \multicolumn{1}{l}{FactorVAE}    & 24.3 & 15.6 & 0.89\\
    \midrule
    \multicolumn{1}{l}{CAUSAL-REP}   & 26.8 & 16.1 & 0.88\\
    \midrule
    \multicolumn{1}{l}{cVAE}         & 22.4 & 12.4 & 0.84 \\
    \multicolumn{1}{l}{GMVAE}        & 23.2 & 12.8 & 0.81 \\
    \midrule
    \multicolumn{1}{l}{\method} & \textbf{31.9} & \textbf{20.2} & \textbf{0.89} \\
  \bottomrule
\end{tabular}
\end{small}
\end{minipage}
\hspace{10pt}
\begin{minipage}[c]{0.53\textwidth}
\includegraphics[width=\textwidth]{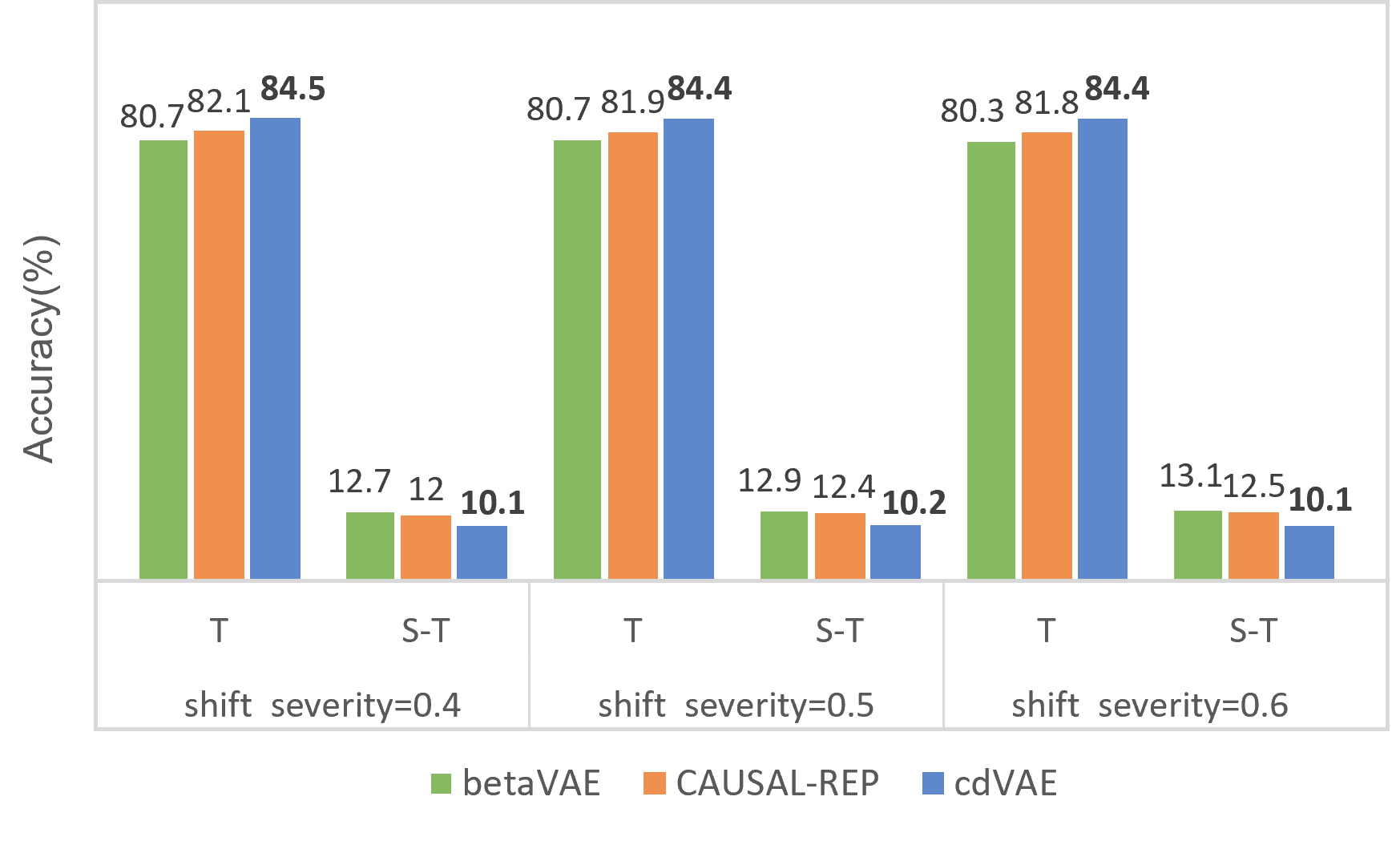}
\captionof{figure}{\small Compare \method with $\beta$-Vae, CAUSAL-REP on classification under distribution shift. T represents accuracy on the target data, S-T represents the performance drop when the classifier trained on the source data is directly tested on the target data.}
\end{minipage}

\noindent\textbf{(Q2.2) \method better discovers the ground truth generative factors.}\\
In the task of shape classification, we further examine how well the learned representations approximate the ground truth generative labels and to what extent they are causally disentangled. Table 2 shows that \method outperforms all baselines in approximating the generative factors and the disentanglement. 

\subsection{Ablations studies}
We further conduct ablation studies to show that providing inductive bias indeed improves the discovery of generative factors in the latent space. Concretely, we compare \method with conditional VAE (cVAE)~\cite{kingma2013auto} and GMVAE~\cite{dilokthanakul2016deep}. Compared with vanilla VAE, the method proposed uses label information to provide inductive bias to confounders for partitioning the observational dataset. cVAE has the label information compared with vanilla VAE and GMVAE is a VAE modelled by a mixture of Gaussian. 
As shown in \Cref{tab:reconstruction-candle-celebA}, \method has universally better results, showing the necessity of introducing bias to factors discovered in the latent space. We also investigate how choice of $\ms{C}$ affect the model performance and how to adapt \model to the existing method aiming for statistical independent, as shown in \cref{app:experiments}.



\section{Related work}
\label{sec:related-work}

\noindent\textbf{Disentangled Representations}
The pursuit for disentangled representation can be dated to the surge of representation learning and is always closely associated with the generative process in modern machine learning, following the intuition that each dimension should encode different features. 
\cite{chen2016infogan} attempts to control the underlying factors by maximizing the mutual information between the images and the latent representations. \cite{eastwood2018framework} propose a quantitative metric with the information theory. They evaluate the disentanglement, completeness, and informativeness by fitting linear models and measuring the deviation from the ideal mapping. \cite{higgins2016beta, kim2018disentangling, chen2018isolating, mathieu2019disentangling} encourage statistical independence by penalizing the Kullback-Leibler divergence (KL) term in the VAE objective. However, the non-causal definitions of disentanglement fail to consider the cases where correlated features in the observational dataset can be disentangled in the generative process. Such a challenge is well-approached through a line of research from the causal perspective.

\noindent\textbf{Causal Generative Process.}
Causal methods are widely used for eliminating spurious features in various domains and improving understandable modelling behaviours\cite{wang2020causal, xu2022patchmix, liu2023cat}. It is not until \cite{suter2019robustly} that it was introduced for a strict characterization of the generative process.
\cite{suter2019robustly} first provided a rigorous definition of a causal generative process and the definition of disentangled causal representation as the non-existence of causal relationships between two variables, i.e., the intervention on one variable does not alter the distribution of the others. The authors further introduce \textit{interventional robustness} as an evaluation metric and show its advantage on multiple benchmarks. \cite{reddy2022causally} follow the path of \cite{suter2019robustly} and further propose two evaluation metrics and the Candle dataset. The confounded assumption allows for correlation in the latent space without tempering with the disentanglement in the data generative. Despite effective evaluation tools, there is still a missing piece on how to infer a set of causally disentangled features. Using the proposed evaluation metric as regulation, the model implicitly assumes unconfoundedness and it falls back to finding statistical independence in the latent space. The problem of unrealistic unconfoundedness assumption is identified by \cite{wang2021desiderata}. They assume that confounders exist but they are unobservable. They further propose an evaluation metric considering the existence of confounders, that causally disentangled latent variables have independent support measured by the IOSS score. Similar to the evaluation metrics introduced in \cite{suter2019robustly, reddy2022causally}, IOSS is also a necessary condition of the causal disentanglement. More importantly, as in previous work focusing on obtaining statistical independence, such a regulation suffers from the identifiability issue. 

\noindent\textbf{Weak Supervision for Inductive Bias.}
The identifiability issue in unsupervised disentangled representation learning is first identified in \cite{locatello2019challenging}. Specifically, they show from the theory that such a learning task is impossible without inductive biases on both the models and the data.  Naturally, a series of weak-supervised or semi-supervised methods~\cite{chen2020weakly, ahuja2022weakly, brehmer2022weakly} are proposed with a learning objective of statistical independence or alignment.  In this paper, we take a step further for the confounding assumption, assuming that the confounders are observable with proper inductive bias so that the latent representation can be better identified.
We, similarly, adopt partial labels of the dataset as the supervision signal. We treat the labels as a source of possible confounders and allow the learning of correlated but causally disentangled latent generative factors to be learned.

\section{Conclusion and discussions}
\label{sec:limitation-conclusion}
In this paper, we recognize the importance of bringing in confounders with proper inductive bias into the discovery of causally disentangled latent factors. Specifically, we propose a framework, \model, that introduces the inductive bias from the domain knowledge and a method, \method, to identify the generative factors. Although \method helps with discovering generative factors under confounders, the learning process relies on the domain knowledge, the choice of $\ms{C}$, and that $\ms{C}$ has to be discrete/categorical and have a finite number of realizations. How to generalize to continuous $\ms{C}$ or select effective $\ms{C}$ from the available knowledge set is beyond the scope of this paper and will be investigated in future studies. We hope this work could bring attention to the importance of inductive bias of confounders and inspire future works in this direction. In addition, better capturing of generative factors could bring a more controlled generative process and positive social impact.

\clearpage


\clearpage

\appendix

\section{Introduction of do calculus.}
\label{app:do-calculus}

Do-calculus consists of three rules that help with identifying causal effects.

\begin{rules}[Insertion/deletion of observations] 
\label{thm:rule1}
\begin{equation}
    P(y|do(x), z, w) = P(y|do(x), w) \quad \text{if} \quad (Y \indep Z | X, W)_{G_{\overline{X}}}
\end{equation}
\end{rules}

\begin{rules}[Action/observation exchange] 
\label{thm:rule2}

\begin{equation}
    P(y|do(x), do(z), w) = P(y|do(x), z, w) \quad \text{if} \quad (Y \indep Z | X, W)_{G_{\overline{X}\underline{Z}}}
\end{equation}
\end{rules}

\begin{rules}[Insertion/deletion of actions] 
\label{thm:rule3}

\begin{equation}
       P(y|do(x), do(z), w) = P(y|do(x), w)  \quad  \text{if} \quad (Y \indep Z | X, W)_{G_{\overline{X}\overline{Z(W)}}}
\end{equation}
\end{rules}

where $G_{\overline{X}}$ is the graph with all incoming edges to $X$ being removed,  $G_{\underline{W}}$ is the graph with all outcoming edges to $W$ being removed, and $Z(W)$ is the set of $Z$-nodes that are not ancestors of any $W$-node.

Intuitively, \Cref{thm:rule1} states when an observant can be omitted in estimating the interventional distribution, \Cref{thm:rule2} illustrates under what condition, the interventional distribution can be estimated using the observational dataset, and \Cref{thm:rule3} decides when we can ignore an intervention. 

\section{Proofs}

\label{app:proofs}

\subsection{Proof of Proposition~\ref{thm:do-evaluation}}
\label{app:proof-prop-3-1}

\begin{proposition}
    \label{thm:do-evaluation-restate}
    Let $Z_1$ and $Z_2$ be two random variables, $\cstar$ be the ground truth confounder set. If $\ms{C}$ is a superset of or is equivalent to $\cstar$, i.e., $\cstar \subseteq \ms{C}$, 
    with $c$ being a realization of $\ms{C}$, we have
    \begin{equation}
        \label{eq:do-evaluation-restate}
        P(Z_2 | do(Z_1)) = \sum_{c \in \ms{C} } P(Z_2 | Z_1, \ms{C}=c)P(\ms{C}=c)
    \end{equation}
    if no $C \in \ms{C}$ is a descendent of $\ms{Z}$.  
\end{proposition}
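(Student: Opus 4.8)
The plan is to recognize the claimed identity as the backdoor adjustment formula and to derive it from the three rules of do-calculus (\Cref{thm:rule1,thm:rule2,thm:rule3}). The two structural hypotheses map cleanly onto the backdoor criterion: the assumption $\cstar \subseteq \ms{C}$ guarantees that $\ms{C}$ contains every common cause of $Z_1$ and $Z_2$ and hence blocks all backdoor paths, while the assumption that no $C \in \ms{C}$ is a descendant of $\ms{Z}$ guarantees both that conditioning on $\ms{C}$ opens no collider path and that $\ms{C}$ is left unaffected by an intervention on $Z_1$.

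First I would expand the interventional distribution by the law of total probability over the realizations of $\ms{C}$, working entirely inside the post-intervention world:
\begin{equation*}
    P(Z_2 \mid do(Z_1)) = \sum_{c \in \ms{C}} P(Z_2 \mid do(Z_1), \ms{C}=c)\, P(\ms{C}=c \mid do(Z_1)).
\end{equation*}
The derivation then reduces each factor separately. For the mixing weight, I would apply Rule 3 (insertion/deletion of actions) to obtain $P(\ms{C}=c \mid do(Z_1)) = P(\ms{C}=c)$; the required independence $(\ms{C} \indep Z_1)_{G_{\overline{Z_1}}}$ holds because, once the incoming edges of $Z_1$ are cut, any remaining $Z_1$--$\ms{C}$ connection is a directed path out of $Z_1$, which would force some $C$ to be a descendant of $Z_1 \in \ms{Z}$ and contradict the descendant hypothesis. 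For the conditional outcome term, I would apply Rule 2 (action/observation exchange) to obtain $P(Z_2 \mid do(Z_1), \ms{C}=c) = P(Z_2 \mid Z_1, \ms{C}=c)$; here the required independence $(Z_2 \indep Z_1 \mid \ms{C})_{G_{\underline{Z_1}}}$ is exactly the statement that, after deleting the outgoing edges of $Z_1$, conditioning on $\ms{C}$ blocks every surviving (backdoor) path between $Z_1$ and $Z_2$. Substituting both simplifications back into the sum yields the claimed adjustment formula.

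The main obstacle is not the algebra but verifying the two graphical d-separation conditions that license Rule 2 and Rule 3, since these are where the hypotheses actually do their work. For Rule 2 I must argue that $\ms{C}$ is a valid adjustment set: completeness of $\ms{C}$ (from $\cstar \subseteq \ms{C}$) ensures every backdoor path through a confounder is intercepted by a non-collider in $\ms{C}$, and the descendant condition ensures that no member of $\ms{C}$ sits on a directed $Z_1 \to Z_2$ path or acts as a collider whose conditioning would unblock a new path. I would make this precise by a short d-separation argument in $G_{\underline{Z_1}}$: any active $Z_1$--$Z_2$ path must leave $Z_1$ through an incoming edge (a backdoor), and such a path is necessarily blocked by an element of $\ms{C} \supseteq \cstar$. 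A minor subtlety worth stating explicitly is that coordinates of $\ms{C}$ lying outside $\cstar$ do not spoil the argument---given the descendant assumption they neither open colliders nor lie on directed paths---which is precisely the fact the paper invokes to justify ignoring confounder-irrelevant labels.
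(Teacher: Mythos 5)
Your proof follows exactly the paper's own route: decompose $P(Z_2 \mid do(Z_1))$ by the law of total probability over realizations of $\ms{C}$, reduce the outcome term $P(Z_2 \mid do(Z_1), \ms{C}=c)$ to $P(Z_2 \mid Z_1, \ms{C}=c)$ via Rule 2, and reduce the mixing weight $P(\ms{C}=c \mid do(Z_1))$ to $P(\ms{C}=c)$ via Rule 3. If anything, you are more thorough than the paper, whose proof invokes the two rules by name without checking their d-separation antecedents; your directed-path argument for Rule 3 and backdoor-blocking argument for Rule 2 supply exactly the verification the paper leaves implicit.
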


\begin{proof}

\begin{align*}
      P(Z_2 | do(Z_1)) &= P(Z_2 | do(Z_1), \ms{C})  P(\ms{C} | do(Z_1))\\
      P(Z_2 | do(Z_1), \ms{C}) & \eqtext{\Cref{thm:rule2}} P(Z_2|Z_1, \ms{C}) \\
      P(\ms{C} | do(Z_1)) & \eqtext{\Cref{thm:rule3}} P(\ms{C})\\
      P(Z_2 | do(Z_1)) &= \sum_{c \in \ms{C} } P(Z_2 | Z_1, \ms{C}=c)P(\ms{C}=c)
\end{align*}
    
\end{proof}

\subsection{Proof of Theorem~\ref{thm:mixture-gaussian}}
\label{app:proof-thm-4-1}

\begin{theorem}
\label{thm:mixture-gaussian-restate} 
Suppose that the latent variable $\ms{Z}$ on dataset $\ms{X}$ given $\ms{C} = c$ is Gaussian $\mathcal{N}(\mu^c(\ms{X}), \Sigma^c(\ms{X}))$. Specifically,
\begin{equation*}
    P(\ms{Z}|\ms{C}=c, \ms{X}) = \displaystyle (2\pi )^{-D/2}\det({ {\Sigma^c }})^{-1/2}\,\exp \left(-{\frac {1}{2}}(\mathbf {Z} -{ {\mu^c }})^{\!{\mathsf {T}}}{{(\Sigma^c) }}^{-1}(\mathbf {Z} -{{\mu^c }})\right),
\end{equation*}
where $\ms{Z} \in \mathbb{R}^D$.
If $\Sigma^c(\ms{X})$ is diagonal for all $c$, we have
 \begin{equation}
     l_c=\sum_{i=1}^{D}d(\mathbb{E}(Z_i | \doc{Z_{-i}}, \ms{X}) ,\quad  \mathbb{E}(Z_i | \ms{X})) = 0.
 \end{equation}
\end{theorem}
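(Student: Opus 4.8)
The plan is to exploit the fact that a diagonal covariance makes the coordinates of $\ms{Z}$ conditionally independent given $\ms{C}=c$ and $\ms{X}$, so that the conditioning on $Z_{-i}$ inside the $do^c$ operator becomes vacuous and both expectations in $d(\cdot,\cdot)$ collapse to the same mixture of component means. This is essentially a one-line observation dressed up in mixture notation, and the strategy is to reduce it to that observation.

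First I would unfold the definition of $do^c$ (\Cref{def:doc-model}) to write, for each $i$,
\begin{equation*}
  \mathbb{E}(Z_i \mid \doc{Z_{-i}}, \ms{X}) = \sum_{c \in \ms{C}} \mathbb{E}(Z_i \mid \ms{X}, Z_{-i} = z_{-i}, \ms{C} = c)\, P(\ms{C} = c).
\end{equation*}
The key step is to evaluate the inner conditional expectation. Since $P(\ms{Z}\mid \ms{C}=c,\ms{X})$ is Gaussian with diagonal $\Sigma^c(\ms{X})$, its density factorizes across coordinates, so $Z_i$ is independent of $Z_{-i}$ conditional on $\ms{C}=c$ and $\ms{X}$. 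Hence the conditioning on $Z_{-i}=z_{-i}$ drops out and
\begin{equation*}
  \mathbb{E}(Z_i \mid \ms{X}, Z_{-i} = z_{-i}, \ms{C} = c) = \mathbb{E}(Z_i \mid \ms{X}, \ms{C} = c) = \mu_i^c(\ms{X}),
\end{equation*}
which is in particular free of $z_{-i}$. Substituting back gives $\mathbb{E}(Z_i \mid \doc{Z_{-i}}, \ms{X}) = \sum_{c} \mu_i^c(\ms{X})\, P(\ms{C}=c)$.

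Next I would compute the target $\mathbb{E}(Z_i \mid \ms{X})$ by the law of total expectation over $\ms{C}$, obtaining the same mixture $\sum_c \mu_i^c(\ms{X})\, P(\ms{C}=c)$ of the component means $\mu_i^c(\ms{X})$. Comparing the two expressions, the argument of $d(\cdot,\cdot)$ vanishes for every $i$, and summing over $i$ yields $l_c = 0$, as claimed.

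The main obstacle — really the only substantive point — is justifying the coordinatewise independence that kills the $Z_{-i}$ conditioning; this rests entirely on the diagonality of $\Sigma^c(\ms{X})$, which lets the multivariate Gaussian density split into a product of one-dimensional Gaussians so that the $(i,-i)$ cross terms in the quadratic form disappear. A secondary bookkeeping issue is the matching of the mixing weights: the $do^c$ operator is defined with the marginal $P(\ms{C}=c)$, whereas a literal total-expectation expansion of $\mathbb{E}(Z_i\mid\ms{X})$ produces $P(\ms{C}=c\mid\ms{X})$. I would therefore state explicitly (or absorb into the modelling convention of \Cref{def:doc-model}) that the same mixing weights govern both expansions, so that the two mixtures coincide term by term and the equality is exact rather than merely approximate.
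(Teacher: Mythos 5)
Your proof is correct and hinges on the same key fact as the paper's: when $\Sigma^c(\ms{X})$ is diagonal, conditioning on $Z_{-i}$ inside the component $\ms{C}=c$ does not move the conditional mean of $Z_i$. But the two arguments are organized differently, and the difference is instructive. The paper immediately rewrites both arguments of $d(\cdot,\cdot)$ as expectations conditioned on a single realization $\ms{C}=c$ (in effect treating $l_c$ as a per-component quantity), and then applies the multivariate Gaussian conditional-mean formula $\mathbb{E}[Z^c_i \mid Z^c_{-i}] = \mu^c_i + \Sigma^c_{i,-i}(\Sigma^c_{-i,-i})^{-1}(Z^c_{-i}-\mu^c_{-i})$, noting that diagonality annihilates the cross-covariance $\Sigma^c_{i,-i}$ so that both sides reduce to $\mu^c_i$. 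You instead keep the full mixture over $c$ from \Cref{def:doc-model} on both sides, derive the within-component independence from the factorization of the diagonal-covariance density (mathematically equivalent to the paper's cross-covariance computation), and then compare the two resulting mixtures. This buys you something the paper glosses over: you are forced to notice, and correctly flag, that the $do^c$ definition mixes with weights $P(\ms{C}=c)$ while the law of total expectation for $\mathbb{E}(Z_i\mid\ms{X})$ mixes with weights $P(\ms{C}=c\mid\ms{X})$, and the two mixtures $\sum_c \mu^c_i\,P(\ms{C}=c)$ and $\sum_c \mu^c_i\,P(\ms{C}=c\mid\ms{X})$ need not coincide unless these weights agree or the component means $\mu^c_i$ are constant in $c$. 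The paper's proof hides exactly this issue inside its unjustified second equality, where $\mathbb{E}[Z_i\mid\ms{X}]$ silently becomes $\mathbb{E}[Z_i\mid\ms{X},\ms{C}=c]$. So the extra bookkeeping convention you propose (matching mixing weights, or reading $l_c$ as a per-component loss) is not pedantry; it is an assumption the paper's own argument also needs but never states.
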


\begin{proof}
We suppose that 
\begin{equation}
    P(\ms{Z}|\ms{C}=c, \ms{X}) = \displaystyle (2\pi )^{-D/2}\det({ {\Sigma^c }})^{-1/2}\,\exp \left(-{\frac {1}{2}}(\mathbf {Z} -{ {\mu^c }})^{\!{\mathsf {T}}}{{(\Sigma^c) }}^{-1}(\mathbf {Z} -{{\mu^c }})\right)
\end{equation}
where we omit $\ms{X}$ for simplicity and $D$ is the dimension of $\ms{Z}$ for any given $c$.
By definition of $l_c$ (\Cref{eq:doc-expecatation}) and proposition~\ref{thm:do-evaluation}, 
\begin{align}
    l_c &= \sum_{i=1}^{D}d\left(\mathbb{E}[Z_i | \doc{Z_{-i}}, \ms{X}] -  \mathbb{E}[Z_i | \ms{X}]\right)\\
    & = \sum_i^D d(E[Z_i|Z_{-i},\ms{X}, C = c], E[Z_i|\ms{X}, C = c]) \\ \label{eq:dist_expectation}
    & = \sum_i^D d(E[Z^c_i|Z^c_{-i}], E[Z^c_i])
\end{align}
where we denote $\ms{Z}^c = [\ms{Z}|\ms{X},C = c]$ for simplicity. Notice that $\ms{Z}^c \sim \mathcal{N}(\mu^c, \Sigma^c) \in \mathbb{R}^D$, we therefore know that the conditional distribution of any subset vector $Z^c_k$, given the complement vector $Z^c_j$, is also a multivariate Gaussian distribution \cite{stat_proof_Joram} 
\begin{equation}
    Z^c_k|Z^c_{j} \sim \mathcal{N}(\mu^c_{k|j}, \Sigma^c_{k|j})
\end{equation}
where 
\begin{equation}
    \mu^c_{k|j} = \mu^c_k + \Sigma^c_{k,j}(\Sigma^c_{j,j})^{-1}(Z^c_j - \mu^c_j), \quad \Sigma^c_{k|j} = \Sigma^c_{k,k} - \Sigma^c_{k,j}(\Sigma^c_{j,j})^{-1}\Sigma^c_{j,k},
\end{equation}
given that $\Sigma^c_{j,j}$ is nonsingular.

Hence we know that the first expectation in \Cref{eq:dist_expectation} becomes 
\begin{align}
    E[Z^c_i|Z^c_{-i}] & = \mu^c_i + \Sigma^c_{i,-i}(\Sigma^c_{-i,-i})^{-1}(Z^c_{-i} - \mu^c_{-i})
\end{align}
assuming that $\Sigma^c_{-i,-i}$ is nonsingular.
Since $\mathbb{E}[Z^c_i] = \mu^c_i$, the loss $l_c$ can be written as 
\begin{equation}
    l_c = \sum_i^D d(\mu^c_i + \Sigma^c_{i,-i}(\Sigma^c_{-i,-i})^{-1}(Z^c_{-i} - \mu^c_{-i}), \mu^c_i).
\end{equation}
We assume further that $\Sigma^c$ is a diagonal matrix. Therefore $\Sigma^c_{-i,-i} = \ms{0}$ is a zero row vector. Then 
\begin{equation}
    l_c =  \sum_i^D d(\mu^c_i, \mu^c_i) = 0
\end{equation}
\end{proof}
\section{Experimental Details}
\label{app:experiments}

\subsection{Experimental Details}

The experiments are conducted on 4 NVIDIA GeForce RTX 2080Ti. In each experiment, we repeat 5 times with different seeds and report the averaged results. In all experiments, only partial information on the ground truth confounder is provided. Specifically, for example, the 3dshape dataset, we first make some predefined rules, such as `` $70\%$ cubes are red''. Then we generate 700 red cubes and 300 cubes in other colors. The generation process naturally divides the dataset into different subgroups, and we can thus explicitly control how inductive bias is provided, i.e., the grouping. In the celebA dataset, since we do not have access to the ground truth generative factors, so we assume any label sets only contain partial information. 


\subsection{Ablation study} \label{app:subsec:ablation}

We investigate how the choice of $\ms{C}$ affect the model performance and how to adapt \model to the existing method aiming for statistical independence, as shown in \cref{app:experiments}.

\begin{minipage}[c]{0.48\textwidth}
\vspace{-.5em}
 \captionof{table}{Performance under different $\ms{C}$ on shape classification on 3dshape dataset, shift severity=0.5.}
 \label{tab:choice-of-c}
  \centering
  \begin{small}
  \begin{tabular}{cccc}
    \toprule
    \textbf{Choice of  $\ms{C}$ }   &  Acc - T $\uparrow$ &  IRS $\uparrow$ \\
    \hline
    \multicolumn{1}{l}{ $\ms{C} = \emptyset$ }  & 79.2   & 0.82      \\
    \multicolumn{1}{l}{ $\ms{C} = \ms{C}^*$  }  & 88.2  & 0.89    \\
    \multicolumn{1}{l}{ partial $\ms{C}^*$ }  &  84.5  & 0.87    \\
    \bottomrule
\end{tabular}
\end{small}
\end{minipage}
\begin{minipage}[c]{0.48\textwidth}
\captionof{table}{Adapt \method to existing methods. The IOSS and Reconstruction loss are measured based on image generation task and the performance drop is measured on shape classification on the target domain under shift severity=0.5.}
\begin{small}
\begin{tabular}{lccc}
    \toprule
    \textbf{Methods} & \textbf{IOSS} $\downarrow$ & \textbf{Recon} $\downarrow$ & \textbf{Acc-T} $\uparrow$ \\
    \midrule
    \multicolumn{1}{l}{IOSS}         & 0.14 &  0.12 & 81.8  \\
    \multicolumn{1}{l}{\method + IOSS}  & 0.12 & 0.08 & 84.5  \\
  \bottomrule
\end{tabular}
\end{small}
\end{minipage}

\noindent\textbf{ \model improves the learning of ground truth generative factors under a reasonable choice of label set.} To understand how the choice of $\ms{C}$ affects the performance, we repeat the shape classification task with different choices of $\ms{C}$ under $50\%$ shift severity. When with $\ms{C}$, we assume the generative process is unconfounded, and \method degrades to the vanilla VAE model. With partial $\ms{C}$, we partition the data according to only 2 values of the shifting variables instead of 4. With full $\ms{C}$, we provide the full confounders. As shown in \Cref{tab:choice-of-c}, even partial information of the confounders improves the model performance in OOD generalization and obtains more robust latent representations. 

\noindent\textbf{Adapting \model to existing works further improve their performance.} We compare the performance between regulation through IOSS\cite{wang2021desiderata} and \method + IOSS in image generation and classification tasks on 3dshape dataset. In the \method + IOSS, we apply additional regularization terms based on the $\ms{Z}$. The results show that \model framework could further improve the performance with desired level of inductive bias given.

\subsection{Pseudo-code}
\label{code}
\begin{algorithm}[ht]
    \caption{Train a VAE such that the latent representation is causally disentangled}
\algrenewcommand\algorithmicrequire{\textbf{Input:}}
\algrenewcommand\algorithmicensure{\textbf{Output:}}
\algnewcommand{\LineComment}[1]{\State \(\triangleright\) #1}
\algorithmicrequire{ Number of labels $N_C$, training data $\ms{X}$ with labels $c$, ratio of each categories/confounders $P(\ms{C} = c)$ in the training set, dimension of latent space $D$}
\begin{algorithmic}[1]
\For{$x \in \ms{X}$}
\For{$c \in \ms{C}$}
\State Define $\ms{Z}^c = [\ms{Z}|x, C = c]$, and obtain from encoder $\ms{Z}^c \sim \mathcal{N}(\mu^c(x), \Sigma^c(x))$ for each $c$, assuming $\Sigma^c(x)$ to be diagonal matrix:
\begin{equation}
    \Theta^c_{enc}(x) = [\mu^c,\texttt{diag}(\Sigma^c)] \in \mathbb{R}^{2d}, \quad \mu^c \in \mathbb{R}^d, \quad \texttt{diag}(\Sigma^c) \in \mathbb{R}^{d}
\end{equation}
\State Sample from $\ms{Z}^c \sim \mathcal{N}(\mu^c(x), \Sigma^c(x))$:
\begin{equation}
    \ms{Z}^c = \mu^c + (\Sigma^c)^{\frac{1}{2}}\epsilon^c, \epsilon^c \sim \mathcal{N}(\ms{0},I)
\end{equation}
\State Parametrize $\pi^c \sim \mathcal{N}(\mu_{\pi^c}(x), \sigma_{\pi^c}(x)) \in \mathbb{R}$ with neural network.
\State Regulate the covariance matrix to be identity matrix with KL divergence 
\begin{equation}
    D^c_{KL} = \frac{1}{2} \left[\log \frac{1}{\det \Sigma^c} - D + \text{tr}(\Sigma^c) \right]
\end{equation}
\EndFor
\State Normalize $\Pi_C = (\pi^{c_1},...,\pi^{c_{N_C}})$ such that $\|\Pi_C\|^2 = 1$.
\State Compute classification loss between $\Pi_C$ with label $c$:
\begin{equation}
    \mathcal{L}_{cls} = H(\Pi_C,c)
\end{equation}
\State Let $\ms{Z}(x) = \sum_{c \in \ms{C}} \pi^c \ms{Z}^c(x)$, and obtain the reconstructed sample from decoder: $x' = \Theta_{dec}(\ms{Z}(x))$. Compute reconstruction loss for $\ms{Z}(x)$:
\begin{equation}
    \mathcal{L}_{rec} = \texttt{mse}(x', x)
\end{equation}
\State Compute total loss
\begin{equation}
    \mathcal{L}_{total}(x) = \mathcal{L}_{rec} + L_{cls} + \sum_{c \in \ms{C}} D^c_{KL}
\end{equation}
and update encoders and decoders.
\EndFor
\end{algorithmic}
\end{algorithm}

\subsection{Additional Experimental Results}

The classification accuracy on both the source and the target distribution with variance is given in the table below.

\begin{table}[t]
\caption{\textbf{Compare \method with $\beta$-Vae, CAUSAL-REP on classification under distribution shift. T represents accuracy on the target data, S represents the performance on the target domain when the classifier trained on the source data is directly tested on the target data.} }
  \label{tab:reconstruction-candle}
 
  \begin{tabular}{lcccccc}
    \toprule         
     &  \multicolumn{2}{c}{\textbf{shift = 0.4}} &  \multicolumn{2}{c}{\textbf{shift  = 0.5}}  &  \multicolumn{2}{c}{\textbf{ severity = 0.6}} \\ \cmidrule(lr){2-3} \cmidrule(lr){4-5} \cmidrule(lr){6-7} 
    \textbf{Methods} & \textbf{Acc-S} & \textbf{Acc-T} & \textbf{Acc-S} & \textbf{Acc-T} & \textbf{Acc-S} & \textbf{Acc-T}\\
    \midrule
    \multicolumn{1}{l}{CAUSAL-REP}   &   $94.1{\small \pm 0.04}$   & $82.1{\small \pm 0.08}$ & $94.3{\small \pm 0.03}$ & $81.9{\small \pm 0.07}$ &$94.3{\small \pm 0.02}$ & $81.8{\small \pm 0.11}$\\
    \multicolumn{1}{l}{$\beta$-VAE} & $93.4{\small \pm 0.07}$ & $80.7{\small \pm 0.12}$  &   $93.6{\small \pm 0.05}$& $80.7{\small \pm 0.03}$ & $93.4{\small \pm 0.04}$ & $80.3{\small \pm 0.09}$\\
    \multicolumn{1}{l}{\method}    & $94.6{\small \pm 0.02}$ & $84.5{\small \pm 0.05}$ & $94.6{\small \pm 0.04}$ & $84.4{\small \pm 0.05}$ & $94.5{\small \pm 0.03}$ & $84.4{\small \pm 0.04}$ \\
  \bottomrule
\end{tabular}
\end{table}


\begin{thebibliography}{10}

\bibitem{ahuja2022weakly}
K.~Ahuja, J.~S. Hartford, and Y.~Bengio.
\newblock Weakly supervised representation learning with sparse perturbations.
\newblock {\em Advances in Neural Information Processing Systems},
  35:15516--15528, 2022.

\bibitem{brehmer2022weakly}
J.~Brehmer, P.~De~Haan, P.~Lippe, and T.~Cohen.
\newblock Weakly supervised causal representation learning.
\newblock {\em arXiv preprint arXiv:2203.16437}, 2022.

\bibitem{3dshapes18}
C.~Burgess and H.~Kim.
\newblock 3d shapes dataset.
\newblock https://github.com/deepmind/3dshapes-dataset/, 2018.

\bibitem{chen2020weakly}
J.~Chen and K.~Batmanghelich.
\newblock Weakly supervised disentanglement by pairwise similarities.
\newblock In {\em Proceedings of the AAAI Conference on Artificial
  Intelligence}, volume~34, pages 3495--3502, 2020.

\bibitem{chen2018isolating}
R.~T. Chen, X.~Li, R.~B. Grosse, and D.~K. Duvenaud.
\newblock Isolating sources of disentanglement in variational autoencoders.
\newblock {\em Advances in neural information processing systems}, 31, 2018.

\bibitem{chen2016infogan}
X.~Chen, Y.~Duan, R.~Houthooft, J.~Schulman, I.~Sutskever, and P.~Abbeel.
\newblock Infogan: Interpretable representation learning by information
  maximizing generative adversarial nets.
\newblock {\em Advances in neural information processing systems}, 29, 2016.

\bibitem{dilokthanakul2016deep}
N.~Dilokthanakul, P.~A. Mediano, M.~Garnelo, M.~C. Lee, H.~Salimbeni,
  K.~Arulkumaran, and M.~Shanahan.
\newblock Deep unsupervised clustering with gaussian mixture variational
  autoencoders.
\newblock {\em arXiv preprint arXiv:1611.02648}, 2016.

\bibitem{eastwood2018framework}
C.~Eastwood and C.~K. Williams.
\newblock A framework for the quantitative evaluation of disentangled
  representations.
\newblock In {\em International Conference on Learning Representations}, 2018.

\bibitem{higgins2016beta}
I.~Higgins, L.~Matthey, A.~Pal, C.~Burgess, X.~Glorot, M.~Botvinick,
  S.~Mohamed, and A.~Lerchner.
\newblock beta-vae: Learning basic visual concepts with a constrained
  variational framework.
\newblock 2016.

\bibitem{horan2021unsupervised}
D.~Horan, E.~Richardson, and Y.~Weiss.
\newblock When is unsupervised disentanglement possible?
\newblock {\em Advances in Neural Information Processing Systems},
  34:5150--5161, 2021.

\bibitem{kim2018disentangling}
H.~Kim and A.~Mnih.
\newblock Disentangling by factorising.
\newblock In {\em International Conference on Machine Learning}, pages
  2649--2658. PMLR, 2018.

\bibitem{kingma2013auto}
D.~P. Kingma and M.~Welling.
\newblock Auto-encoding variational bayes.
\newblock {\em arXiv preprint arXiv:1312.6114}, 2013.

\bibitem{kinney2014equitability}
J.~B. Kinney and G.~S. Atwal.
\newblock Equitability, mutual information, and the maximal information
  coefficient.
\newblock {\em Proceedings of the National Academy of Sciences},
  111(9):3354--3359, 2014.

\bibitem{liu2023cat}
X.~Liu, H.~Lu, J.~Yuan, and X.~Li.
\newblock Cat: Causal audio transformer for audio classification.
\newblock In {\em ICASSP 2023-2023 IEEE International Conference on Acoustics,
  Speech and Signal Processing (ICASSP)}, pages 1--5. IEEE, 2023.

\bibitem{liu2015faceattributes}
Z.~Liu, P.~Luo, X.~Wang, and X.~Tang.
\newblock Deep learning face attributes in the wild.
\newblock In {\em Proceedings of International Conference on Computer Vision
  (ICCV)}, December 2015.

\bibitem{locatello2019challenging}
F.~Locatello, S.~Bauer, M.~Lucic, G.~Raetsch, S.~Gelly, B.~Sch{\"o}lkopf, and
  O.~Bachem.
\newblock Challenging common assumptions in the unsupervised learning of
  disentangled representations.
\newblock In {\em international conference on machine learning}, pages
  4114--4124. PMLR, 2019.

\bibitem{locatello2020weakly}
F.~Locatello, B.~Poole, G.~R{\"a}tsch, B.~Sch{\"o}lkopf, O.~Bachem, and
  M.~Tschannen.
\newblock Weakly-supervised disentanglement without compromises.
\newblock In {\em International Conference on Machine Learning}, pages
  6348--6359. PMLR, 2020.

\bibitem{mathieu2019disentangling}
E.~Mathieu, T.~Rainforth, N.~Siddharth, and Y.~W. Teh.
\newblock Disentangling disentanglement in variational autoencoders.
\newblock In {\em International Conference on Machine Learning}, pages
  4402--4412. PMLR, 2019.

\bibitem{pearl2009causal}
J.~Pearl.
\newblock Causal inference in statistics: An overview.
\newblock {\em Statistics surveys}, 3:96--146, 2009.

\bibitem{pearl2012calculus}
J.~Pearl.
\newblock The do-calculus revisited.
\newblock {\em arXiv preprint arXiv:1210.4852}, 2012.

\bibitem{reddy2022causally}
A.~G. Reddy, V.~N. Balasubramanian, et~al.
\newblock On causally disentangled representations.
\newblock In {\em Proceedings of the AAAI Conference on Artificial
  Intelligence}, volume~36, pages 8089--8097, 2022.

\bibitem{stat_proof_Joram}
J.~Soch, T.~B. of~Statistical~Proofs, T.~J. Faulkenberry, K.~Petrykowski, and
  C.~Allefeld.
\newblock {StatProofBook/StatProofBook.github.io: StatProofBook 2020}, Dec.
  2020.

\bibitem{suter2019robustly}
R.~Suter, D.~Miladinovic, B.~Sch{\"o}lkopf, and S.~Bauer.
\newblock Robustly disentangled causal mechanisms: Validating deep
  representations for interventional robustness.
\newblock In {\em International Conference on Machine Learning}, pages
  6056--6065. PMLR, 2019.

\bibitem{wang2021desiderata}
Y.~Wang and M.~I. Jordan.
\newblock Desiderata for representation learning: A causal perspective.
\newblock {\em arXiv preprint arXiv:2109.03795}, 2021.

\bibitem{wang2020causal}
Y.~Wang, D.~Liang, L.~Charlin, and D.~M. Blei.
\newblock Causal inference for recommender systems.
\newblock In {\em Proceedings of the 14th ACM Conference on Recommender
  Systems}, pages 426--431, 2020.

\bibitem{xu2022patchmix}
C.~Xu, C.~Liu, X.~Sun, S.~Yang, Y.~Wang, C.~Wang, and Y.~Fu.
\newblock Patchmix augmentation to identify causal features in few-shot
  learning.
\newblock {\em IEEE Transactions on Pattern Analysis and Machine Intelligence},
  2022.

\bibitem{yang2021causalvae}
M.~Yang, F.~Liu, Z.~Chen, X.~Shen, J.~Hao, and J.~Wang.
\newblock Causalvae: Disentangled representation learning via neural structural
  causal models.
\newblock In {\em Proceedings of the IEEE/CVF Conference on Computer Vision and
  Pattern Recognition}, pages 9593--9602, 2021.

\bibitem{zhu2021understanding}
S.~Zhu, B.~An, and F.~Huang.
\newblock Understanding the generalization benefit of model invariance from a
  data perspective.
\newblock {\em Advances in Neural Information Processing Systems},
  34:4328--4341, 2021.

\end{thebibliography}
\end{document}